\newtheorem*{remark}{Remark}
\newtheorem{proposition}{Proposition}
\newtheorem{theorem}{Theorem}
\definecolor{LightCyan}{rgb}{0.88,1,1}
\newcolumntype{b}{>{\hsize=1.55\hsize\linewidth=\hsize}X}
\newcolumntype{s}{>{\hsize=.45\hsize}X}
\begin{document}
\title{Federated Learning in UAV-Enhanced Networks: Joint Coverage and Convergence Time Optimization}
\author{Mariam~Yahya,~Setareh~Maghsudi,~and~Slawomir~Stanczak
}
\maketitle
\begin{abstract}
Federated learning (FL) involves several devices that collaboratively train a shared model without transferring their local data. FL reduces the communication overhead, making it a promising learning method in UAV-enhanced wireless networks with scarce energy resources. Despite the potential, implementing FL in UAV-enhanced networks is challenging, as conventional UAV placement methods that maximize coverage increase the FL delay significantly. Moreover, the uncertainty and lack of a priori information about crucial variables, such as channel quality, exacerbate the problem. In this paper, we first analyze the statistical characteristics of a UAV-enhanced wireless sensor network (WSN) with energy harvesting. We then develop a model and solution based on the multi-objective multi-armed bandit theory to maximize the network coverage while minimizing the FL delay. Besides, we propose another solution that is particularly useful with large action sets and strict energy constraints at the UAVs. Our proposal uses a scalarized best-arm identification algorithm to find the optimal arms that maximize the ratio of the expected reward to the expected energy cost by sequentially eliminating one or more arms in each round. Then, we derive the upper bound on the error probability of our multi-objective and cost-aware algorithm. Numerical results show the effectiveness of our approach.
\let\thefootnote\relax\footnotetext{Parts of this paper were presented at the European Wireless Conference, Dresden, Germany in September 2022 \cite{yahya_conf}. M. Yahya is with the Department of Computer Science, University of Tübingen, 72074 Tübingen, Germany (email: mariam.yahya@uni-tuebingen.de). S. Maghsudi is with the Department of Computer Science, University of Tübingen, 72074 Tübingen, Germany and the Fraunhofer Heinrich Hertz Institute, 10587  Berlin, Germany (email: setareh.maghsudi@uni-tuebingen.de). S. Stanczak is with the Department of Telecommunication Systems, The Technical University of Berlin, 10587 Berlin, Germany, and the Fraunhofer Heinrich Hertz Institute, 10587 Berlin, Germany (email: slawomir.stanczak@hhi.fraunhofer.de).} 
\end{abstract} 
%
\begin{IEEEkeywords}
Federated learning, unmanned aerial vehicles, coverage, convergence time, multi-objective multi-armed bandits.
\end{IEEEkeywords}
%
\section{Introduction}
The advances in the hardware for on-device intelligence enable unmanned aerial vehicles (UAVs) to offer computation-intensive services based on machine learning methods, such as image classification \cite{zhang2022}{\color{black} and smart agriculture  \cite{friha2021internet}}. Implementing centralized machine learning methods in UAV networks is often infeasible due to the requirement of transferring large data volumes and privacy hazards. To mitigate such shortcomings, federated learning (FL) has emerged as an alternative learning paradigm in UAV-enhanced networks.\\
FL is a distributed learning algorithm that preserves the clients' privacy and reduces communication costs by training a shared model mainly locally in the clients without sharing their raw data. In each global iteration, the server transmits the global model to the clients to update it using their local data. Then, they send the updated local models to the server that aggregates them to start a new iteration. This iterative process of transmitting and updating the models continues until achieving the desired global accuracy \cite{mcmahan2017communication}. FL facilitates various applications and services in UAV-enhanced networks. These include communication, network management, and computation offloading \cite{nguyen2021federated}. \\
One major challenge of FL is the potentially lengthy convergence, which stems from (i) the prolonged transmission time of the local and global updates over the wireless channels, (ii) the computation delays at the FL clients to update the local models, and (iii) the iterative global updates. There are different methods to address this issue, such as client selection 
\cite{chen2020convergence,Huang2021,xia2020multi,cho2020bandit,xia2021federated,9390199}, resource allocation (RA) \cite{FL_Dinh2021,FL_zeng2020}, and update compression \cite{becking2022adaptive}. Client selection is one of the most common approaches for reducing the FL convergence time. It selects the clients that lead to faster convergence to participate in FL, which makes it particularly useful in networks with many clients and a few channels. The CS criteria often depend on the client's communication- and computation delays in each global iteration \cite{xia2021federated}, the clients' local losses  \cite{cho2020bandit}, or their data type \cite{zhang2021client}. To ensure the clients' participation in FL training,  \cite{Huang2021,xia2020multi,xia2021federated,cho2020bandit} tackle the fairness issue in CS. Alternatively, \emph{Dinh et al} \cite{FL_Dinh2021} use the resource allocation approach to balance two conflicting objectives, minimizing the FL time and reducing the energy consumption. Convergence time minimization is a crucial research topic also in UAV-enhanced networks \cite{FL_zeng2020,yang2021privacy}. The authors in \cite{FL_zeng2020} design a joint power allocation and scheduling scheme for a UAV swarm to optimize the FL convergence while controlling energy consumption. Finally, asynchronous FL schemes shorten the convergence time \cite{yang2021privacy} as the servers do not wait for all local updates.\\
Another major challenge to optimizing the FL performance is the uncertainty due to the randomness and lack of information, for example, in channel qualities, resource availability, and network density or topology. Even though the channel state information and the variables above are unavailable in realistic scenarios, the majority of the cutting-edge research assumes otherwise \cite{FL_Dinh2021,FL_zeng2020,yang2021privacy}. Still, a few works use machine learning-based methods to overcome this challenge. The most relevant example to our work example is the multi-armed bandit (MAB) framework, where the network manager learns the network's statistical information sequentially during the FL training process \cite{xia2020multi, Huang2021,cho2020bandit,xia2021federated,9390199,9500804}. More details on the MAB framework is given in \textbf{Section~\ref{sec:intro_MOMAB}}. \textbf{Table~\ref{tab:SoA}} summarizes the most relevant state-of-the-research.
\begin{table}[!t]
\centering
    \caption{A comparison between the FL convergence time minimization approaches.}
  \begin{tabularx}{\textwidth}{l| l c c b s} \hline
    \rowcolor[gray]{0.8} &  Paper  & MAB & UAV & Highlights & \scriptsize{Other considerations} \\ \hline
    \multirow{6}{1em}{\rotatebox[origin=c]{90}{Client selection \hspace{4.5cm}}} &  \cite{xia2020multi}   & \checkmark & \ding{55} &  Minimizes the transmission and local computation time for two cases, clients with i.i.d and balanced datasets and non-i.i.d. and unbalanced datasets for which a fairness constraint is added.& Fairness in CS and the clients' availability. \\ \cline{2-6}
    & \cite{Huang2021}    & \checkmark  & \ding{55}  & Uses contextual combinatorial bandits (C$^2$MAB) to estimate the model exchange time between the clients and the server and balances between the fairness in CS and the FL training efficiency. &  Fairness in CS and the clients' availability. \\ \cline{2-6}
   & \cite{cho2020bandit}  & \checkmark  & \ding{55}  & Selects the clients with higher local losses for faster FL convergence using MABs and increases the fairness in user selection.& Fairness in CS. \\ \cline{2-6}
  & \cite{xia2021federated}  &  \checkmark  & \checkmark  & Reduces the number of communication rounds by selecting the clients based on their update staleness and weight divergence, then it shortens the average time interval per round by performing latency based client scheduling using MABs. & Fairness in CS.  \\ \cline{2-6}
  & \cite{9390199} &  \checkmark  & \ding{55} & Uses the $\epsilon$-greedy algorithm to select the clients that jointly minimize the the number of global rounds (clients with a significant local losses) and the delay per round (clients with lower latency). & -- \\ \cline{2-6}
 & \cite{yang2021privacy}  & \ding{55} & \checkmark & Aims for faster FL convergence. The UAV servers employ an asynchronous advantage actor-critic-based algorithm for joint device selection, UAV placement, and resource management. & The Reward captures the FL time and accuracy loss.  \\ \hline
{\multirow{2}{*}{\rotatebox[origin=c]{90}{ Resource allocation \hspace{1cm}}}} & \cite{FL_zeng2020} & \ding{55} & \checkmark  & Minimizes the number of global rounds by jointly designing the power allocation and scheduling in a UAV network. This work considers the transmission and UAV flying power. The optimization problem is solved using the sample average approximation and dual method.& The UAVs' stability and their flying and transmission power consumption \\ \cline{2-6}
& \cite{FL_Dinh2021} & \ding{55} & \ding{55}  & Proposes an FL algorithm named FEDL for heterogeneous clients and characterizes the trade-off between the local and gobal iterations.  It also proposes a resource allocation problem in wireless networks to achieve a trade-off between the FL time and the energy consumption. & The  heterogeniety in the clients datasize and CPU frequency.  \\ \hline
{\multirow{2}{*}{\rotatebox[origin=c]{90}{UAV coverage\hspace{1.2cm}} }} & \cite{yahya2022} & \ding{55} & \checkmark &  Minimizes the FL delay by optimizing the locations of the UAVs to achieve homogeneous coverage across the UAVs to minimize the maximum computation delay, followed by a bandwidth and power allocation step to minimize the communication delay.  & The power is allocated to achieve max-min fairness in the data rate.   \\  \cline{2-6}
& Ours & \checkmark & \checkmark  & Proposes a MO-MAB-based approach to jointly maximize the UAVs coverage and minimize the FL local model update time under uncertainty. It also proposes a solution that is particularly with a large number of arms. It is based on a scalarized-BAI algorithm.  & Uncertainty about the channel and the sensors' location and availability.      \\ \hline
    \end{tabularx}
    \label{tab:SoA}
\end{table}
{\color{black} Thanks to their flexibility and low cost, UAVs integrate well into 5G, 6G, and internet-of-things (IoT) networks as aerial base stations to improve their performance and enable new applications \cite{friha2021internet, FL_zeng2020,yahya2022,yang2021privacy,yang2021fresh,yang2021proactive,peng2023energy}. For example, UAVs are used in private content caching while ensuring age-of-information aware, fair, and energy-efficient content delivery \cite{yang2021fresh}. In such applications, ML methods are widely applicable. For example, deep neural networks predict the mobile users' locations to perform UAV-network slicing to guarantee coverage and the ultra-reliable low latency communications (URLLC) requirements of control signals \cite{yang2021proactive}. Moreover, using deep reinforcement learning, one can optimize the phase shifts of reconfigurable intelligent surfaces for energy harvesting \cite{peng2023energy} and the energy efficiency of blockchain operations in UAV-assisted computation offloading in IoT networks \cite{lan2023uav}. In this paper, we optimize the performance of a FL UAV-enhanced network under high uncertainty and conflicting objectives using MABs.
}
\subsection{Motivation}
As described above, the UAVs might engage in an FL procedure, each using the data collected locally in its coverage area \cite{FL_Brik2020}, {\color{black} one significant example appears in IoT for agriculture for crop monitoring, fertilizer management, and data collection \cite{friha2021internet}}.
In such a scenario, the UAVs' placement and coverage substantially affect the FL convergence time, an issue that remains unaddressed to a great extent. The conventional approaches of UAV placement for coverage maximization prolong the FL convergence delay drastically due to the inhomogeneous sensor- or user-density in the UAVs' coverage area. Larger data volumes often require more processing time, so the FL server waits longer for slower UAVs that cover dense locations to update the global model. Such a straggler effect increases the overall delay. In  \cite{yahya2022}, we proposed an approach to jointly maximize the network coverage and minimize the FL convergence time by effective UAVs' placement and radio resource management. The UAVs' placement ensured higher homogeneity in the number of sensors covered by each UAV, thus, similar processing times. \\
This paper proposes a novel method to jointly maximize the sensor coverage and minimize the FL convergence time in a UAV-enhanced network under uncertainty about the sensors' distribution and channels' quality. It also considers the non-deterministic sensors' activity due to opportunistic energy harvesting. The two objectives are conflicting since increasing the sensor coverage results in heavier computation and extended delays. To solve this problem under uncertainty, we use a multi-objective multi-armed bandits (MO-MAB) framework \cite{drugan2013} merged as an alternative learning paradigm in UAV networks. With this approach, we provide a solution for a realistic UAV network that is robust to information shortage.
\subsection{MO-MAB as a Solution Framework}
\label{sec:intro_MOMAB}
MAB is an online sequential decision-making problem where, in each round, an agent selects an arm from a set to receive a reward sampled from its apriori unknown reward-generating process. The agent intends to choose the arms to minimize its long-term cumulative regret \cite{Maghsudi16:MAB}. In the multi-objective version of the MAB problem, each arm returns a reward vector, where each element is the reward of the corresponding objective \cite{drugan2013}. Since there are multiple objectives, there can be several optimal reward vectors, thus, several optimal arms \cite{drugan2013}, \cite{ruadulescu2020}. Hence the agent aims to simultaneously minimize the regret in all objectives by playing optimal arms fairly.\\
\textit{Drugan et al.} proposed two ways to obtain the set of optimal arms in a MO-MAB problem: the Pareto partial order approach and the scalarization approach \cite{drugan2013}. The former maximizes the reward vectors directly in the multi-objective reward space by looking at the dominance relationships between arms, while the latter uses a set of appropriate scalarization functions to transform the multi-objective problem into a single objective one and find the optimal arms. In this paper, we solve our bi-objective problem of coverage maximization and convergence time minimization using both approaches and compare their performance. \\
One challenge in our problem formulation is the large number of arms. In addition, the scalarized multi-objective approach requires finding the optimal arm for each scalarization function. To address these issues, we develop an algorithmic solution to our bi-objective optimization problem based on the best-arm identification (BAI) algorithm \cite{audibert2010best}. The fixed-budget BAI algorithm  aims at finding the optimal arm with high probability in a fixed number of rounds \cite{audibert2010best, shahrampour2017sequential}. Consequently, the algorithm's performance does not concern the cumulative regret but the quality of the final recommendation, i.e., the probability that the recommended arm is optimal.\\
In each round, the \emph{general sequential elimination algorithm} \cite{shahrampour2017sequential} eliminates one \cite{audibert2010best,drugan2014scalarization} or more arms with the lowest average rewards, allowing the algorithm to scale well with the increasing number of arms (see Table 1 in \cite{shahrampour2017sequential}). Our proposed solution extends that algorithm to the multi-objective scenario with one crucial adjustment: the optimal arm is defined as the one that maximizes the ratio of the expected reward obtained from selecting an arm to the incurred cost. This cost depends on the network coverage and transmission delays and accounts for the random energy consumed by the UAVs in updating and transmitting the local models. Therefore, the proposed algorithm differs from the approach in \cite{xia2016best} that eliminates one arm per round assuming that the reward and cost are independent. Then, we find an upper bound on the error probability for our proposed algorithm. The fixed budget setting of the proposed algorithm allows us to estimate the required energy of the solution.
\subsection{Our Contributions}
{\color{black}
This paper considers a UAV-enhanced WSN in which the UAVs collect sensor data and engage in FL with a ground server. It addresses the problem of UAV placement, aiming to simultaneously maximize the network coverage and minimize the FL convergence delay under high uncertainty about the sensors' placement, activity, and channel conditions. As these two objectives are conflicting, this paper proposes solutions based on bi-objective-MABs (BO-MABs). The following list outlines the key contributions.
}
\begin{itemize}
\item Analyzing the components of the FL time and obtaining their statistical distributions.
\item Formulating the problem as an optimization problem with two conflicting objectives: maximizing the sensor coverage, and minimizing the total FL delay. 
\item Modelling and solving the problem in a BO-MAB framework using both the Pareto partial ordering and scalarization. Then, the simulation results for the two methods are presented.
\item Given the possibility of having many arms and the limited energy resources at the UAVs, the paper proposes a scalarization-based sequential elimination algorithm for BAI for the BO-MAB scheme. In this algorithm, the best arm is defined as the one that maximizes the ratio of the expected reward to the expected energy cost for an energy-efficient solution. Afterward, an upper bound to the error probability is derived for the proposed algorithm. 
\item The proposed algorithm is used to find the UAV configuration that achieves the best coverage and convergence delay per energy cost. The numerical results show the algorithm's performance for an FL network training the MNIST dataset for image classification. 
\end{itemize}
%
\subsection{Paper Organization}
\textbf{Section~\ref{sec:system_model}} presents the system model. \textbf{Section~\ref{sec:coverage_time}}, analyzes the FL convergence time and shows the dependency of the convergence time on the network coverage. In \textbf{Section~\ref{sec:problem_formulation}}, the problem is formulated as a bi-objective optimization problem. Then, in \textbf{Section~\ref{sec:modeling}}, the multi-objective MAB problem is described and then is used in  \textbf{Section~\ref{sec:solution_MOMAB}} to solve the joint problem of coverage maximization and convergence time minimization. In \textbf{Section~\ref{sec:sequential_elimination}}, a computationally efficient solution is proposed using a scalarization-based sequential elimination algorithm for best arm identification.  \textbf{Section~\ref{sec:numerical_results}} includes the numerical results and \textbf{Section~\ref{sec:conclusion}} concludes. 
\section{System Model} 
\label{sec:system_model}
The system model illustrated in \textbf{Fig.~\ref{fig:system_model}}. In this WSN, the sensors are located randomly in a square region of edge length $E$. The sensors' locations and density distribution are unknown to the UAVs and the ground server. All the sensors are homogeneous in their transmit power and data size, and each sensor relies on ambient energy harvesting to operate. The network also includes a set $\mathcal{U}$ of $U$ rotary-wing UAVs with identical computation capabilities. Each sensor collects data and transmits it to a UAV; thus, a sensor communicates with a UAV if (i) it is within that UAV's coverage region and (ii) it has sufficient energy. Due to the random nature of energy harvesting, the number of transmitting sensors (active) at a given time is random. Each UAV collects data from the active sensors in its coverage region and engages in the FL process with the ground server over a dedicated communication link. This system was motivated by many applications such as the IoT for agriculture and natural disaster analysis. {\color{black} The proposed framework is easily adaptable to other system models as long as network delay and coverage remain uncertain. The first arises due to the randomness in the communication delays and the UAVs' processing capabilities, whereas the latter arises by the randomness in the number of sensors or their availability.}
%
\begin{figure}[ht]
    \centering
    \includegraphics[scale=0.5]{ 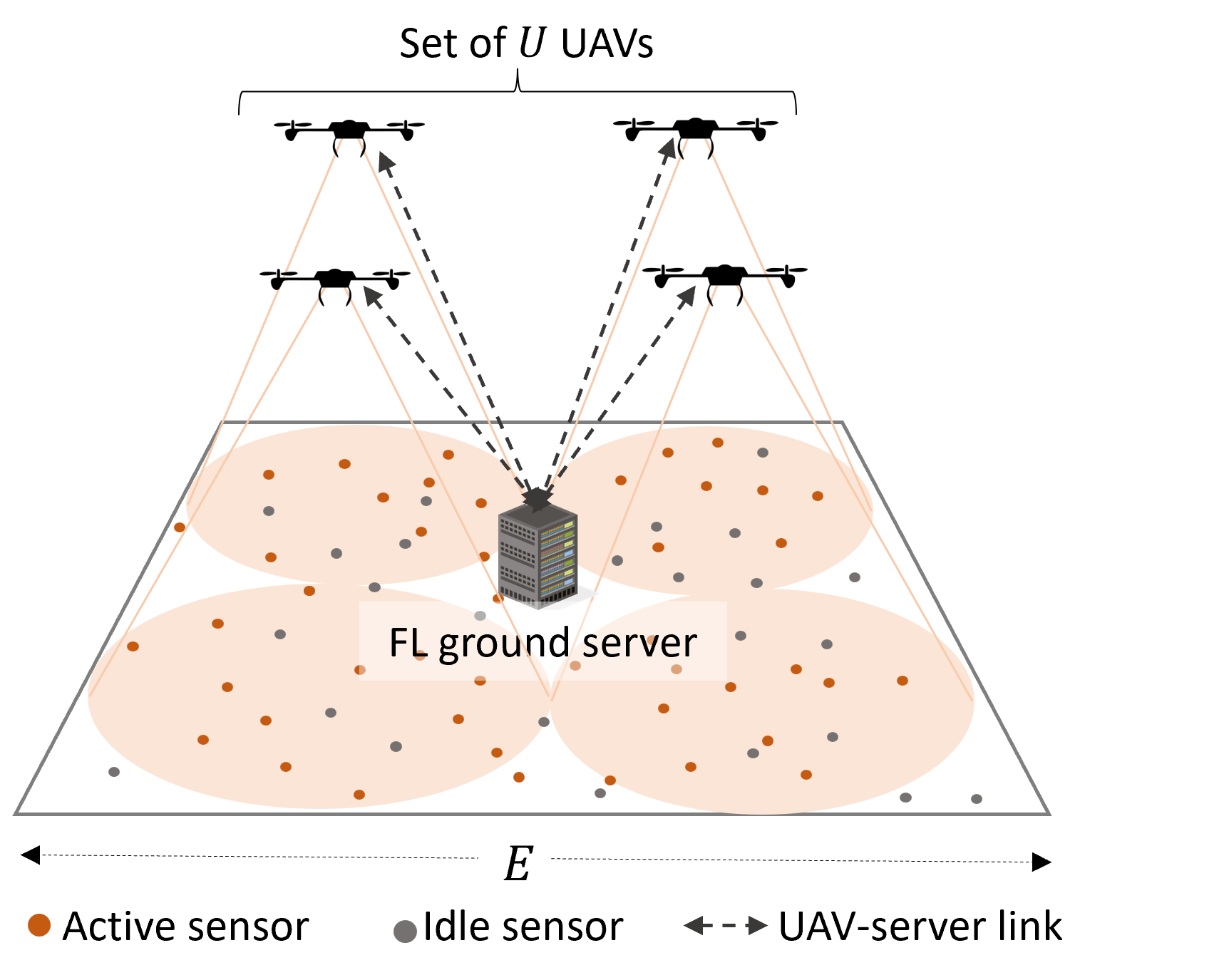}
    \caption{\small{A UAV-enhanced sensor network. Each UAV collects data from the active sensors in its coverage region and performs FL with the ground sever.}}
    \label{fig:system_model}
\end{figure}
\subsection{Federated Learning in the UAV-Enhanced Network}
Each UAV $i \in \mathcal{U}$ collects $D_i$ data samples $\{ \boldsymbol{x}_{il},y_{il}\}_{l=1}^{D_i}$ from the sensors in its coverage region, $\boldsymbol{x}_{il}\in \mathbb{R}^d$ is a $d$-feature input vector and $y_{il}\in \mathbb{R}$ is the corresponding label. Then, each UAV trains and updates the local model $\boldsymbol{\omega}$ using its $D_i$ data samples. We use $f_i\left(\boldsymbol{\omega},\boldsymbol{x}_{il},y_{il}\right)$ to denote the loss function for the $l$th data sample at UAV $i$. The average loss function of UAV $i$, $F_i(\boldsymbol{\omega},\boldsymbol{x}_{i1},y_{i1},\dots, \boldsymbol{x}_{iD_i},y_{i D_i})$, or simply $F_i(\boldsymbol{\omega})$, is
\begin{equation}
    F_i(\boldsymbol{\omega}) \coloneqq \frac{1}{D_i}\sum_{l=1}^{D_i}f_i(\boldsymbol{\omega}, \boldsymbol{x}_{il}, y_{il}).
\end{equation}
Let $D=\sum_{i=1}^{U} D_i$, the global loss function is defined as \cite{FL_Dinh2021}
\begin{equation}
    F(\boldsymbol{\omega}) \coloneqq \sum_{i=1}^U \frac{D_i}{D} F_i(\boldsymbol{\omega}) = \frac{1}{D}\sum_{i=1}^{U} \sum_{l=1}^{D_i} f_i(\boldsymbol{\omega}, \boldsymbol{x}_{il}, y_{il}),
\end{equation}
The goal of FL is to find a parameter  $\boldsymbol{\omega}^* \in \mathbb{R}^d$ that minimizes the global loss function
\begin{equation}
   \boldsymbol{\omega}^* = \underset{\boldsymbol{\omega \in \mathbb{R}^d}}{\arg \min} \, F(\boldsymbol{\omega}).
    \label{eq_FL}
\end{equation}
Without loss of generality, the \textit{FEDL} FL algorithm \cite{FL_Dinh2021} is used in this work to solve the problem in \eqref{eq_FL}. The algorithm assumes that $F_i(\boldsymbol{\omega})$ is $L$-smooth and $\beta$-strongly convex, which holds for several applications, such as the $l_2$-regularized linear regression model with $f_i(\boldsymbol{\omega}, \boldsymbol{x}_{il},y_{il})=\frac{1}{2}\left(\langle \boldsymbol{x}_{i,l}, \boldsymbol{\omega}\rangle-y_{i,l} \right)^2+\frac{\beta}{2} ||\boldsymbol{\omega}||^2$, where $\langle \boldsymbol{u},\boldsymbol{v} \rangle$ is the inner product of two vectors $\boldsymbol{u}$ and $\boldsymbol{v}$ and $||\boldsymbol{v}|| =\langle \boldsymbol{v},\boldsymbol{v} \rangle$ is the 2-norm of $\boldsymbol{v}$.\\
In brief, each global iteration $m$ of FEDL starts when the ground server transmits a model to the UAVs. Then, each UAV $i$ updates the model using the $D_i$ local data samples collected from the active sensors in its coverage region with a predetermined accuracy $\kappa \in (0,1)$. The local update time at UAV $i$ is
\begin{equation}
    T^\text{cp}_i =I(\kappa) \frac{\delta D_i}{f_{\text{CPU},i}}, 
\label{eq:FL_Tcp}
\end{equation}
where $I(\kappa)$ is the number of local iterations required to achieve local accuracy $\kappa$, $\delta$ (cycle/sample) is the number of computation cycles per data sample, and $f_{\text{CPU}}$ is the UAV's CPU frequency. Afterward, each UAV sends its current model to the server to update the global model for the next round. The global iterations continue until 
 the stopping condition is met. By definition, the FL converges when the 
difference between the current and the minimal loss functions is no more than the global accuracy $\epsilon$, i.e., $F(\boldsymbol{\omega}^{m})-F(\boldsymbol{\omega}^{*}) \le \epsilon$. However, since $F(\boldsymbol{\omega}^{*})$ is unknown to the server, the algorithm stops after a predetermined number of global iterations.
%
\begin{algorithm}
\small
\caption{\small{The FEDL Algorithm \cite{FL_Dinh2021}}} 
\label{Alg:FL}
\begin{algorithmic}[1]
\STATE In the first global iteration $m=0$, the server generates and broadcasts an initial global model $\boldsymbol{\omega}^{0}$ to all UAVs. It also determines the local accuracy parameter $\kappa\in (0,1)$.
\STATE In the following global iterations, each UAV $i$ receives $\boldsymbol{\omega}^{m-1}$ and $\nabla \bar{F}^{m-1}$ from the server. It then solves $\boldsymbol{\omega}_i^{m}\! =\! \underset{\boldsymbol{\omega} \in \mathbb{R}^d}{\text{arg}\min}\,{J_i^{m}( \boldsymbol{\omega}\!)}$,
{\small    
\begin{equation}
  {J_i^{m}(\boldsymbol{\omega})} \coloneqq  F_i(\boldsymbol{\omega})+ \langle\eta \nabla \bar{F}^{m-1}-\nabla F_i(\boldsymbol{\omega}^{m-1}),\boldsymbol{\omega}\rangle,
\end{equation}}
\noindent where  $\eta$ is the learning rate. The resulting $\boldsymbol{\omega}_i^{m}$ satisfies the accuracy $\kappa \in (0,1)$, i.e.,
\begin{equation}
   ||\nabla J_i^{m}(\boldsymbol{\omega}_i^{m})|| \le  \kappa ||\nabla J_i^{m}(\boldsymbol{\omega}^{m-1})||, \quad \forall i \in \mathcal{U},
\end{equation}
$\kappa=0$ solves the local problem optimally, whereas $\kappa=1$ means that the local model doesn't change.
\STATE The UAVs send their local models $\boldsymbol{\omega}_i^{m}$ and local gradients $\nabla F_i (\boldsymbol{\omega}_i^{m} )$ to the server. It collects them to calculate $\boldsymbol{\omega}^{m}=\sum_{i=1}^U \frac{D_i}{D} \boldsymbol{\omega}_i^{m}$ and $\nabla \bar{F}^{m} = \sum_{i=1}^U \frac{D_i}{D} \nabla \bar{F}_i(\omega_i^{m})$. The server then transmits these values to all UAVs.
\STATE The global iterations, i.e., the collection of updating and transmission processes repeats until convergence, i.e., until achieving a certain difference between the current and the minimal loss function. Formally, $F(\boldsymbol{\omega}^{m})-F(\boldsymbol{\omega}^{*}) \le \epsilon$.
\end{algorithmic}
\end{algorithm}
\subsection{Transmission Probability of Energy Harvesting Sensors}
\label{sec:energy_harvesting}
The wireless sensors obtain their required energy via ambient energy harvesting, {\color{black} a convenient energy resource in IoT in agriculture \cite{friha2021internet}}. The sensor use a harvest-use scheme. A sensor transmits its data to a UAV only if it lies in its coverage region and has sufficient transmission energy. And since ambient energy harvesting is non-deterministic, the sensors' activity is random \\
We model the energy arrival process at a sensor $m$  as a Poisson process with the rate $\lambda_m$. The amount of energy harvested in each arrival $v$, $Z_{m,v}$, follows an exponential distribution with parameter $\eta_{m,v}$, $Z_{m,v}{\sim}\text{EXP}(\eta_{m,v})$ \cite{maghsudi2017distributed}. The energy harvesting phase continues until the $k_m$ energy arrival. Afterward, the sensor transmits its data to its corresponding UAV. For known $\eta_{m,v}$, one can estimate the number of arrivals; otherwise, it can be selected randomly \cite{maghsudi2017distributed}. The total amount of energy at sensor $m$, denoted by $Y_m$, is the sum of $k_m$ energy arrivals, i.e., $Y_m = \sum_{v=1}^{k_m} Z_{m,v}.$\\
To simplify the analysis, we consider the case of independent and identically distributed (i.i.d) energy arrivals { \footnote{ The extension to the case of independent and non-identically distributed (i.ni.d) energy arrivals is straight forward as it only affects the probability of transmission. The distribution of $Y_m$ for this case is given in  \cite{maghsudi2017distributed} along with the conditions required for approximating the distribution of $Y_m$ by a Normal or an Erlang distributions.}}. Thus, $\eta_{m,v} = \eta_m$ for $v=\{1, \dots, k_m\}$ and the total energy is the sum of $k_m$ i.i.d exponential random variables. As a result, $Y_m$ follows the Erlang distribution with parameters $k_m$ and $\eta_m$, i.e., $Y_m {\sim} \text{Erl}\left( k_m, \eta_m \right)$
\begin{equation}
    f_{Y_m}(y_m) = \frac{{\eta_m}^{k_m}}{(k_m-1)!} y_m^{(k_m -1)} e^{- \eta_m y_m}.
\end{equation}
Let $E_{\text{T}}$ be the amount of energy required for a sensor $m$ to transmit to some UAV. Then, the probability that a covered sensor transmits to the corresponding UAV is
\begin{equation}
    \mathbb{P}\left[ Y_m \ge E_{\text{T}} \right] = \sum_{n=0}^{{k_m}-1} \frac{1}{n!} e^{-\eta_m E_{\text{T}}} (\eta_m E_{\text{T}})^n.
    \label{eq:p_sensor_transmit}
\end{equation}
\subsection{Network Coverage}
\label{subsec:network_coverage}
The network coverage can be maximized by finding the optimal locations of the UAVs in the 3D space and then relocating the UAVs accordingly. However, this approach is computationally costly and power inefficient. In addition, given no information about the sensor's deployment density, it is infeasible to find the optimal UAV placement that maximizes the sensor coverage. To overcome these problems, the UAVs are deployed according to the circle packing theorem in the 2D space and retain a fixed altitude $H$.\footnote{{\color{black}Our work applies to the case where the sensor's distribution is known, but their availability is random. In this case, other initial UAV placements might suit better than CS.}} Then, the coverage of the UAVs is controlled by adjusting the beamwidth of each UAV while maintaining its location. Details follow.\\ 
Each UAV $i$ has a directional antenna with an adjustable beamwidth. Similar to \cite{he2017joint}, the azimuth and elevation half-power beamwidths of the antennas are assumed to be the same and equal to $\theta_i \in (0,\frac{\pi}{2})$ in radians. The approximate antenna gain is \cite{balanis2015}
\begin{equation}
    G_i(\theta_i) = \begin{cases}
    G_0/\theta^2_i, & -\theta_i \le \varphi \le \theta_i \\
    g_0, & \text{else},
    \end{cases}
    \label{eq:gain}
\end{equation}
where $G_0 = 2.2846 $ \cite{he2017joint}, and $g_0$ is the gain in the side lobes. In practice, we have $0< g_0 \ll G_0/\theta_i^2 $. For simplicity, in what follows, we assume $g_0 = 0$ \cite{he2017joint, yang2019energy}.  In the rest of the paper, the subscript $\theta_i$ is added to the variables that depend on the beamwidth.\\
The radius of the coverage disk formed by the main lobe of the antenna of UAV $i$ is $R_{i,\theta_i} = H \tan(\theta_i)$.
The path loss of the link between UAV $i$ and a sensor located at the edge of the coverage region $R_{i,\theta_i}$ in dB yields \cite{alhourani2014}
\begin{equation}
  L_{i,\theta_i}^{\text{dB}} \! =\! \frac{A}{1+ a \exp\left(-b[\frac{180}{\pi} \tan^{-1} \left( \frac{H}{R_{i,\theta_i}} \right) -a]\right)}
  + 10 \log \left( R_{i,\theta_i}^2 +H^2\right) + Y,
    \label{eq:find_rmax}
\end{equation}
where $A=\eta_{\text{LoS}}-\eta_{\text{NLoS}}$ is the mean excessive loss in the line-of-sight (LoS) and non-LOS (NLoS) paths, while $a$ and $b$ are constants that depend on the environment \cite{alhourani2014}. Moreover, $Y=20 \log\left( \frac{4\pi f_c}{c}\right)+\eta_{\text{NLoS}}$, where $f_c$ is the carrier frequency and $c$ is the speed of light. \\
The maximum beamwidth $\theta_{i,\max}$ of UAV $i$ is determined by the minimum acceptable receive power $P_{\text{r,min}}$ at UAV $i$  defined to be
\begin{equation}
P_{\text{r},\min} \coloneqq P_{\text{t}} + G_{i,\theta_i} - L_{i, \theta_{i,\max}}^{dB},
\label{eq:find_theta_max}
\end{equation}
where $P_{\text{t}}$ is the power transmitted by a wireless sensor and it is equal for all sensors. $G_{i,\theta_i}$ is the antenna gain of UAV $i$ when its beamwidth is $\theta_i$. Given $\theta_{i,\max}$, one can determine the maximum coverage area $R_{i,\theta_{i,\max}}$ of UAV $i$. \\
The probability distribution of the number of sensors is unknown to the server. However, we assume this distribution belongs to the class of Poisson distributions parameterized by an unknown rate $\lambda$ (sensor/m$^2$). The probability distribution of the number of sensors located in the coverage disk of UAV $i$ of area $\Omega_{i,\theta_i} = \pi R_{i,\theta_i}^2$ is $\mathbb{P}[N_{i,\theta_i}^{\text{total}} = k]= \frac{(\lambda \Omega_{i,\theta_i})^k e^{-\lambda \Omega_{i,\theta_i}}}{k!}$. Now since \eqref{eq:p_sensor_transmit} gives the probability that a sensor has sufficient harvested energy and it is independent of $\mathbb{P}[N_{i,\theta_i}^{\text{total}} = k]$, the number of active sensors $N_{i,\theta_i}$ is  $\mathbb{P}[N_{i,\theta_i}= k] =\mathbb{P}[N_{i,\theta_i}^{\text{total}} = k] \mathbb{P} [Y_m \ge E_{\text{T}}]$. Hence, we have
\begin{equation}
        \mathbb{P}[N_{i,\theta_i} = k] = \frac{(\lambda \Omega_{i,\theta_i})^k e^{-\lambda \Omega_{i,\theta_i}}}{k!} \sum_{n=0}^{{k_m}-1} \frac{1}{n!} e^{-\eta_m E_{\text{T}}} (\eta_m E_{\text{T}})^n, \qquad k=0,1,\dots .
    \end{equation} 
The average number of active sensors covered by UAV $i$ is 
    \begin{equation}
        \bar{N}_{i,\theta_i} = \lambda \Omega_{i,\theta_i} \sum_{n=0}^{{k_m}-1} \frac{1}{n!} e^{-\eta_m E_{\text{T}}} (\eta_m E_{\text{T}})^n.
    \label{eq:avg_no_nodes}
    \end{equation}
\section{The FL Convergence Time and Its Dependency on Coverage}
\label{sec:coverage_time}
In the proposed UAV-enhanced network, the FL convergence time has three main components:
\begin{itemize}
    \item The local model update time at the UAVs (computation time);
    \item The data transmission time between the sensors and the UAVs ;
    \item The local- and global model exchange time between the UAVs and the server.
\end{itemize}
Changing a UAV’s beamwidth influences the number of its covered sensors, hence the number of its collected data samples and its model update time. In addition, the communication time over the sensor-UAV link depends on the beamwidth because it affects the distance between the sensor and the UAV and so the channel quality. More precisely, larger beamwidths worsen the path loss, reduce the antenna gain, and prolong the communication delays. In contrast, the communication delay between each UAV and the server depends on the length and quality of the UAV-server communication link.
%
\subsection{Computation Time}
\label{sec:computation_time}
The beamwidth of UAV $i$, $\theta_i$, determines the number of active sensors it covers and with it, the number of data samples $D_{i,\theta_i}$ it processes. Given that the sensors sample at the same frequency, i.e., there are $D$ samples per sensor, the average computation time of UAV $i$ yields
\begin{equation}
T^{\text{cp}}_{i,\theta_i}= I(\kappa) \frac{\delta D_{i,\theta_i}}{f_{\text{CPU}}}
=   I(\kappa) \frac{\delta D \bar{N}_{i,\theta_i} }{f_{\text{CPU}}},
\label{eq:Tcp_nodes}
\end{equation}
where $\bar{N}_{i,\theta_i}$ is the average number of active nodes given by \eqref{eq:avg_no_nodes}. That shows that a large beamwidth can result in a high computation time by increasing the coverage. {\color{black} To simplify the derivation of the delay probability distribution}, the UAVs' CPU frequency $f_{\text{CPU}}$ is assumed to be constant. The amount of energy consumed by UAV $i$ in updating its local model is
\begin{equation}
    E_{i,\theta_i}^{\text{cp}}=\frac{\alpha}{2}I(\kappa) \delta D_{i, \theta_i} f^2_{\text{CPU}},  
    \label{eq:energy_UAV}
\end{equation}
where $\alpha$ is the energy capacitance coefficient of the UAVs. \\
The following proposition gives the probability distribution function of the computation time at UAV $i$ for the Poisson nodes' distribution. The value of $T^\text{cp}_{i,\theta_i}$ in \eqref{eq:Tcp_nodes} is an integer multiple $\tau$ of $I(\kappa) \frac{\delta D}{f_{\text{CPU}}}$. The mean computation time at the UAV per unit of coverage area is $\nu = I(\kappa) \frac{\delta D \lambda}{f_{\text{CPU}}}$ second/m$^2$, where $\lambda$ is the sensor density. 
The probability distribution function of the computation time at UAV $i$, $T_{i,\theta_i}^{\text{cp}}$ is
\begin{equation}
    \mathbb{P}\left[T_{i,\theta_i}^{\text{cp}} = \tau \left(I(\kappa) \frac{\delta D}{f_{\text{CPU}}} \right)\right] =  \frac{(\nu \Omega_{i,\theta_i})^\tau e^{-(\nu \Omega_{i,\theta_i})}}{\tau!} 
    \sum_{n=0}^{{k_m}-1} \frac{1}{n!} e^{-\eta_m E_{\text{T}}} (\eta_m E_{\text{T}})^n, \quad \tau=0,1,2,\dots
\end{equation}
%
\subsection{The Communication Time over the UAV-Server Link} 
As long as a UAV does not move, its link to the ground server and the average delay of the server-UAV communication remains fixed. However, because of the random channel quality, the instantaneous delay can vary. The path loss between UAV $i$ and the ground server, denoted $ L_{i,\text{G}}^{\text{dB}}$ is obtainable by replacing $R_{i,\theta_i}$ in \eqref{eq:find_rmax} with $d_{i,\text{G}}$ which represents the horizontal distance  between the projection of UAV $i$ on the ground and the ground server. The data rate then yields
\begin{equation}
r_{i,\text{G}}= B_\text{G} \log \left[1 + \frac{P_{\text{t,G}} |g_{i,\text{G}}|^2 G_{i,\text{G}}}{L_{i,\text{G}} \, \sigma_\text{G}^2 } \right],
\label{eq:rate_GS}
\end{equation}
where $B_\text{G}$ is the bandwidth and $P_{\text{t,G}}$ is the transmission power of the ground server. Moreover, $g_{i,\text{G}}$ and $G_{i,\text{G}}$ respectively denote the gain of the Ricean channel and the antenna gain in the direction of the ground server. Finally, $\sigma_\text{G}^2$ is the noise variance. 
For the downlink and uplink, we assume equal transmission bandwidths and channel reciprocity. Therefore, the transmission time of the global- and local updates with size $S_\text{G}$ and $S_\text{L}$ bits, respectively, is given by 
\begin{equation}
T_{i,\text{G}}=\frac{S_{\text{G}}}{r_{i,\text{G}}}, \quad T_{i,\text{L}}=\frac{S_{\text{L}}}{r_{i,\text{G}}}.
\label{eq:T_global_model}
\end{equation}
The following proposition gives the statistical distribution of the communication time over the UAV-server link for UAV $i$. 
\begin{proposition}
\label{prop:T_uav_ground}
    $T_{i,\text{G}}$ is a random variable denoting the communication time of $S_M$ bits over the UAV-server link, its probability distribution is given by 
\begin{equation}
    f_{T_{i,\text{G}}}(t_{i,\text{G}}) = \frac{S_\text{M}}{2B_{\text{G}} \gamma_{i,\text{G}} \xi_{\text{G}}^2 t_{i,\text{G}}^2}
    \exp \left[ { \frac{k_{i,\text{G}}^2 \gamma_{i,\text{G}} + e^{\left(\frac{S_\text{M}}{B_{\text{G}} t_{i,\text{G}}} \right)} -1}{2\gamma_{i,\text{G}} \xi_{\text{G}}^2} }\right]
    I_0 \left( \frac{k_{i,\text{G}} \sqrt{e^\frac{S_\text{M}}{B_{\text{G}} t_{i,\text{G}}} -1}}{\xi_{\text{G}}^2 \sqrt{\gamma_{i,\text{G}}}}  \right),
\end{equation}
where $\gamma_{i,\text{G}} = \frac{P_{\text{t,G}} G_{i,\text{G}}}{L_{i,\text{G}} \sigma^2_{\text{G}}}$. $S_{\text{M}}$ stands for either the global update $S_{\text{G}}$ or the local update $S_{\text{L}}$. $k_{i,\text{G}}$ and $\xi_{\text{G}}$ represent the parameters of the $\chi^2$ distribution.
\end{proposition}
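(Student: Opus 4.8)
The plan is to treat this as a transformation of a single random variable. Once the UAV's position is fixed, the only stochastic ingredient in $T_{i,\text{G}}$ is the fading coefficient $g_{i,\text{G}}$ entering the rate \eqref{eq:rate_GS}; everything else is deterministic. So I would first identify the law of the channel power $X \coloneqq |g_{i,\text{G}}|^2$, then push it through the deterministic, monotone map $X \mapsto T_{i,\text{G}}$ defined by \eqref{eq:rate_GS}--\eqref{eq:T_global_model} via the change-of-variables formula $f_{T_{i,\text{G}}}(t_{i,\text{G}}) = f_X\!\left(x(t_{i,\text{G}})\right)\,\lvert dx/dt_{i,\text{G}}\rvert$.

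First I would establish the distribution of $X$. Since the UAV--server link is Ricean, $g_{i,\text{G}}$ is complex Gaussian with a nonzero specular (LoS) mean, so $|g_{i,\text{G}}|$ is Rician and $X=|g_{i,\text{G}}|^2$ is a non-central chi-squared variable with two degrees of freedom. A one-line $x=r^2$ substitution in the Rician density gives
\[
f_X(x) = \frac{1}{2\xi_{\text{G}}^2}\exp\!\left(-\frac{x+k_{i,\text{G}}^2}{2\xi_{\text{G}}^2}\right) I_0\!\left(\frac{k_{i,\text{G}}\sqrt{x}}{\xi_{\text{G}}^2}\right), \qquad x\ge 0,
\]
where $k_{i,\text{G}}$ is the magnitude of the LoS component (the non-centrality) and $\xi_{\text{G}}^2$ is the per-quadrature variance -- precisely the two $\chi^2$ parameters named in the statement.

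Next I would set up and invert the deterministic map. With $\gamma_{i,\text{G}} = P_{\text{t,G}}G_{i,\text{G}}/(L_{i,\text{G}}\sigma_{\text{G}}^2)$ as defined, \eqref{eq:rate_GS} reads $r_{i,\text{G}} = B_{\text{G}}\log(1+\gamma_{i,\text{G}}X)$, so $t_{i,\text{G}} = S_{\text{M}}/\left(B_{\text{G}}\log(1+\gamma_{i,\text{G}}X)\right)$ is strictly decreasing in $X$, hence invertible, with $x(t_{i,\text{G}}) = \left(e^{S_{\text{M}}/(B_{\text{G}}t_{i,\text{G}})}-1\right)/\gamma_{i,\text{G}}$. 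Differentiating yields the Jacobian $\lvert dx/dt_{i,\text{G}}\rvert = S_{\text{M}}\,e^{S_{\text{M}}/(B_{\text{G}}t_{i,\text{G}})}/\left(\gamma_{i,\text{G}}B_{\text{G}}t_{i,\text{G}}^2\right)$, the source of the $1/t_{i,\text{G}}^2$ factor and the $S_{\text{M}}/(B_{\text{G}}\gamma_{i,\text{G}}\xi_{\text{G}}^2)$ prefactor. Substituting $x(t_{i,\text{G}})$ into $f_X$, the exponent term $(x+k_{i,\text{G}}^2)/(2\xi_{\text{G}}^2)$ collapses to $\left(e^{S_{\text{M}}/(B_{\text{G}}t_{i,\text{G}})}-1+k_{i,\text{G}}^2\gamma_{i,\text{G}}\right)/(2\gamma_{i,\text{G}}\xi_{\text{G}}^2)$ and the Bessel argument $k_{i,\text{G}}\sqrt{x}/\xi_{\text{G}}^2$ becomes $k_{i,\text{G}}\sqrt{e^{S_{\text{M}}/(B_{\text{G}}t_{i,\text{G}})}-1}/(\xi_{\text{G}}^2\sqrt{\gamma_{i,\text{G}}})$, matching the structure of the claimed density. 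The same derivation covers both $S_{\text{M}}=S_{\text{G}}$ and $S_{\text{M}}=S_{\text{L}}$ unchanged.

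I expect the obstacle to be bookkeeping rather than conceptual. The delicate points are (i) justifying and correctly parameterizing the non-central $\chi^2$ law of $|g_{i,\text{G}}|^2$ in terms of $(k_{i,\text{G}},\xi_{\text{G}})$, and (ii) carrying the Jacobian of the logarithmic rate map through the composition without dropping the $e^{S_{\text{M}}/(B_{\text{G}}t_{i,\text{G}})}$ factor or mismanaging the sign inside the exponential -- exactly the spots where sign and factor errors tend to creep in. Once the starting density and the inverse map are pinned down, the rest is routine substitution.
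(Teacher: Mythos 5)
Your proposal is correct and follows essentially the same route as the paper: identify $|g_{i,\text{G}}|^2$ as non-central $\chi^2$ with two degrees of freedom and push it through the rate and time maps, the only difference being that you apply the change-of-variables formula once to the composite map $X\mapsto T_{i,\text{G}}$ while the paper first differentiates the rate CDF via the Leibniz rule and then transforms $y=a/x$. Your bookkeeping warning is well placed: carrying the Jacobian correctly produces the factor $e^{S_\text{M}/(B_\text{G}t_{i,\text{G}})}$ and a \emph{negative} exponent $-\bigl(k_{i,\text{G}}^2\gamma_{i,\text{G}}+e^{S_\text{M}/(B_\text{G}t_{i,\text{G}})}-1\bigr)/\bigl(2\gamma_{i,\text{G}}\xi_\text{G}^2\bigr)$, both of which are dropped in the paper's step from the rate CDF to its PDF and hence absent from the stated density, so your derivation is the one to trust on those two points.
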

\begin{proof}
    See Appendix \ref{sec:appendix_T_uav_ground}
\end{proof}
%
\label{sec:communication_time}
\subsection{The Communication Time over the Sensor-UAV Link}
Similar to the computation time, the UAVs' beamwidth affects the communication time over the sensor-UAV link. By increasing the beamwidth, the antenna gain reduces, while the coverage area becomes bigger, implying a potentially longer UAV-sensor distance. Here, the focus is on the delay experienced by the farthest sensor from the projection of the UAV on the ground because it has the highest path loss. Due to the uncertainty in the sensors' locations, \textbf{Proposition~\ref{prop:max_distance}} obtains the expected value of the maximum distance between the UAV and the sensor using the order statistics. Later 
 this value is used in finding the average maximum delay. 
\begin{proposition}
The average maximum distance between UAV $i$ and a sensor located in its coverage region is
\begin{equation}
    \bar{Z}_{i,\theta_i} = \int_0^\infty z_i \lambda \bar{N}_{i,\theta_i} \Omega_{i,\theta_i} e^{\left(\lambda \Omega_{i,\theta_i} \right)z_i} \left[1 - e^{\left(\lambda \Omega_{i,\theta_i} \right)z_i}\right]^{\bar{N}_{i,\theta_i}-1} dz_i
    \end{equation}
    \label{prop:max_distance}
\end{proposition}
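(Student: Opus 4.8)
The plan is to recognise $\bar{Z}_{i,\theta_i}$ as the expectation of the \emph{largest} order statistic among the distances of the active sensors from the projection of UAV $i$ on the ground, and to evaluate that expectation through the classical extreme-value order-statistic density. Because the sensors are placed independently and the coverage disk has area $\Omega_{i,\theta_i}$, the first step is to fix the common law of a single sensor's distance $Z$ to the UAV projection. I would derive its cumulative distribution function $F_Z(z_i)$ and density $f_Z(z_i) = F_Z'(z_i)$ from the Poisson coverage model of \textbf{Section~\ref{subsec:network_coverage}}; for the present model this gives the exponential-type law $F_Z(z_i) = 1 - e^{-\lambda \Omega_{i,\theta_i} z_i}$ with density $f_Z(z_i) = \lambda \Omega_{i,\theta_i} e^{-\lambda \Omega_{i,\theta_i} z_i}$. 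This distance law is the essential input and I would isolate it as a lemma (or import it directly from the coverage model), since everything downstream is then a mechanical application of order statistics.

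Next I would reduce the randomness in the sensor count. The number of active sensors seen by UAV $i$ is itself random (a mixed-Poisson count), but the proposition works with its mean $\bar{N}_{i,\theta_i}$ from \eqref{eq:avg_no_nodes}; accordingly I would treat the count as deterministic and equal to $\bar{N}_{i,\theta_i}$, so that the maximum distance becomes the top order statistic of $\bar{N}_{i,\theta_i}$ i.i.d.\ copies of $Z$. By independence, the CDF of the maximum is $[F_Z(z_i)]^{\bar{N}_{i,\theta_i}}$, and differentiating yields the density of the maximum as $\bar{N}_{i,\theta_i}\,[F_Z(z_i)]^{\bar{N}_{i,\theta_i}-1} f_Z(z_i)$.

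Finally I would take the expectation, $\bar{Z}_{i,\theta_i} = \int_0^\infty z_i\, \bar{N}_{i,\theta_i}\,[F_Z(z_i)]^{\bar{N}_{i,\theta_i}-1} f_Z(z_i)\, dz_i$, and substitute the exponential forms for $F_Z$ and $f_Z$. Collecting $\bar{N}_{i,\theta_i} f_Z(z_i) = \lambda \bar{N}_{i,\theta_i} \Omega_{i,\theta_i} e^{-\lambda \Omega_{i,\theta_i} z_i}$ together with $[F_Z(z_i)]^{\bar{N}_{i,\theta_i}-1} = [1 - e^{-\lambda \Omega_{i,\theta_i} z_i}]^{\bar{N}_{i,\theta_i}-1}$ reproduces the claimed integrand, with the upper limit pushed to $\infty$ consistently with the exponential support.

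I expect two points to demand the most care. The main obstacle is justifying the single-sensor distance law $F_Z$ rigorously from the Poisson/coverage assumptions, since a naive uniform-in-disk model would instead give $F_Z(z_i) = z_i^2/R_{i,\theta_i}^2$ on $[0,R_{i,\theta_i}]$; I would therefore make explicit the approximation (or void-probability argument) that produces the exponential form and note the attendant extension of the support to $[0,\infty)$. The second is the passage from a random (mixed-Poisson) count to the deterministic exponent $\bar{N}_{i,\theta_i}$: strictly this is a mean-field substitution rather than an exact identity, so I would either flag it as a modelling approximation or, for an exact statement, average $[F_Z(z_i)]^{N}$ over the Poisson law of $N$ and check that the resulting generating-function expression collapses to the displayed formula.
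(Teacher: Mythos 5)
Your proposal follows essentially the same route as the paper's own proof: posit an exponential law for a single sensor's distance within the coverage disk, substitute it into the top order-statistic density $n[F_Z(z_i)]^{n-1}f_Z(z_i)$ with the deterministic exponent $\bar{N}_{i,\theta_i}$, and integrate to get the mean. You are in fact more careful than the paper, which asserts the exponential distance law without justification (writing the exponent with a positive sign, a typo you implicitly correct) and makes the same mean-field replacement of the random sensor count by $\bar{N}_{i,\theta_i}$ silently, whereas you explicitly flag both as the points needing justification.
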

\begin{proof}
    See Appendix \ref{sec:appendix_max_distance}
\end{proof}
The farthest sensor $F$ from UAV $i$ within its coverage disk experiences the highest path loss denoted by $L_{\bar{Z}_{i,\theta_i}}^{\text{dB}}$, its value is obtainable from \eqref{eq:find_rmax} by replacing $R_{i,\theta_i}$ with $\bar{Z}_{i,\theta_i}$. The sensor's transmission rate, $r_{\bar{Z}_{i,\theta_i}}$, yields
\begin{equation}
r_{\bar{Z}_{i,\theta_i}} = B_{F} \log_2 \left[ 1 + \frac{P_{\text{t}}|g_F|^2 G_{i,\theta_i} } 
{ L_{{\bar{Z}_{i,\theta_i}}}  \sigma_{\text{F}}^2 }
\right],
\label{eq:rate_sensor}
\end{equation}
where $B_{\text{F}}$ is the channel bandwidth and $P_{\text{t}}$ is the sensor's transmit power. In addition, $g_{\text{F}}$ denotes the gain of the Ricean channel at the farthest sensor and $G_{i,\theta_i}$ is the antenna gain of UAV $i$. $\sigma_{\text{F}}^{2}$ is the noise variance. {\color{black} Here, the communication between the sensors and the UAVs is over orthogonal channels that mitigate interference, this work can be extended to other scenarios with interfering sensors by replacing \eqref{eq:rate_sensor} with the expression $r_{\bar{Z}_{i,\theta_i}} = B_{F} \log_2 \left[ 1 + \frac{P_{\text{t}}|g_F|^2 G_{i,\theta_i} L_{{\bar{Z}_{i,\theta_i}}}^{-1}} 
{ I_{\Psi}  +  \sigma_{\text{F}}^2 }  \right]$,
where ${\Psi}$ is the set of active sensors interfering with sensor $F$, and $ I_{\Psi}$ is the interference power. Interference reduces the data rate and results in longer transmission delays. }
The transmission time of $S_{\text{data}}$ bits then yields
\begin{equation}   T_{\bar{Z}_{i,\theta_i}}^{\text{cm}}=\frac{S_{\text{data}}}{r_{\bar{Z}_{i,\theta_i}}}.
\label{eq:T_sensor}
\end{equation}
\begin{proposition}
The probability distribution of the maximum delay over the sensor-UAV link assuming no overlap between the UAVs' coverage regions is
    \begin{equation}
        f_{T_{\bar{Z}_{i,\theta_i}}}(t_{\bar{Z}_{i,\theta_i}}) = \frac{S_\text{data}}{2B_\text{F}\alpha_{\bar{Z}_{i,\theta_i}} \xi_F^2 t_{\bar{Z}_{i,\theta_i}}^2}
        \exp \left[  \frac{k_F^2 \alpha_{\bar{Z}_{i,\theta_i}} + e^\frac{S_\text{data}}{B_\text{F} t_{\bar{Z}_{i,\theta_i}}} -1}{2\alpha_{\bar{Z}_{i,\theta_i}} \xi_F^2} \right]
        I_0 \left( \frac{k_{\text{F}} \sqrt{e^\frac{S_\text{data}}{B_F t_{\bar{Z}_{i,\theta_i}}} -1}}{\xi_\text{F}^2 \sqrt{\alpha_{\bar{Z}_{i,\theta_i}}}}  \right)
    \label{eq:T_sensor_uav}
    \end{equation}
where $\alpha_{\bar{Z}_{i,\theta_i}} =\frac{P_{\text{t}}|g_F|^2 G_{i,\theta_i} } 
{L_{{\bar{Z}_{i,\theta_i}}} \sigma_{\text{F}}^2 }$. $\xi_\text{F}$ and $k_\text{F}$ are the parameters of the $\chi^2$ distribution.
\end{proposition}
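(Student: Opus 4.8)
The plan is to recognize that this statement has exactly the same structure as \textbf{Proposition~\ref{prop:T_uav_ground}}, so I would reuse the transformation-of-random-variables technique and merely substitute the sensor--UAV parameters. The first observation is that, once the average maximum distance $\bar{Z}_{i,\theta_i}$ is fixed by \textbf{Proposition~\ref{prop:max_distance}}, every quantity entering \eqref{eq:rate_sensor} is deterministic except the Ricean fading gain $|g_F|^2$: the path loss $L_{\bar{Z}_{i,\theta_i}}$, the antenna gain $G_{i,\theta_i}$, the transmit power $P_{\text{t}}$, and the noise variance $\sigma_F^2$ are all fixed, so $\alpha_{\bar{Z}_{i,\theta_i}}$ is a constant and the only randomness in the delay comes from $X \coloneqq |g_F|^2$. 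Since $g_F$ is a Ricean channel gain, its envelope is Rice-distributed and therefore $X$ follows a scaled non-central chi-squared law with two degrees of freedom, whose density is $f_X(x) = \frac{1}{2\xi_F^2}\exp\!\left(-\frac{x+k_F^2}{2\xi_F^2}\right) I_0\!\left(\frac{k_F\sqrt{x}}{\xi_F^2}\right)$ for $x\ge 0$, with $k_F$ and $\xi_F$ the non-centrality and scale parameters. Establishing this density, and identifying $k_F,\xi_F$ with the Ricean line-of-sight amplitude and the scattered-component variance, is the one modelling step that must be stated explicitly.

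Next I would write the delay as a strictly monotone function of $X$. Combining \eqref{eq:rate_sensor} and \eqref{eq:T_sensor} gives $t = S_{\text{data}} / \big(B_F \ln(1+\alpha_{\bar{Z}_{i,\theta_i}} X)\big)$, a decreasing bijection from $X>0$ onto $t>0$, which inverts to $x(t) = \big(e^{S_{\text{data}}/(B_F t)}-1\big)/\alpha_{\bar{Z}_{i,\theta_i}}$. Here I would be explicit that the rate is taken with the natural logarithm, or equivalently that the base-conversion constant of $\log_2$ is folded into the derivation, since the exponentials in the target expression are to base $e$. Because the map is monotone and one-to-one, the univariate change-of-variables formula $f_T(t) = f_X\big(x(t)\big)\,\big|\mathrm{d}x/\mathrm{d}t\big|$ applies directly, with no Jacobian matrix or multivariate argument required.

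Finally I would substitute and simplify. The derivative is $\big|\mathrm{d}x/\mathrm{d}t\big| = \frac{S_{\text{data}}}{\alpha_{\bar{Z}_{i,\theta_i}} B_F t^2}\, e^{S_{\text{data}}/(B_F t)}$, which supplies the prefactor $\frac{S_{\text{data}}}{2 B_F \alpha_{\bar{Z}_{i,\theta_i}} \xi_F^2 t^2}$ in combination with the $\frac{1}{2\xi_F^2}$ from $f_X$. Substituting $x(t)$ into the exponent collects the terms over the common denominator $2\alpha_{\bar{Z}_{i,\theta_i}}\xi_F^2$, giving $-\big(\alpha_{\bar{Z}_{i,\theta_i}} k_F^2 + e^{S_{\text{data}}/(B_F t)}-1\big)/(2\alpha_{\bar{Z}_{i,\theta_i}}\xi_F^2)$, and substituting $\sqrt{x(t)} = \sqrt{e^{S_{\text{data}}/(B_F t)}-1}/\sqrt{\alpha_{\bar{Z}_{i,\theta_i}}}$ into the argument of $I_0$ reproduces exactly the claimed Bessel term. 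The main obstacle is purely bookkeeping: tracking the Jacobian factor $e^{S_{\text{data}}/(B_F t)}$, the sign of the exponent, and the powers of $\alpha_{\bar{Z}_{i,\theta_i}}$ consistently through the substitution, since these are where the expression is most error-prone. The no-overlap assumption enters only indirectly, through \textbf{Proposition~\ref{prop:max_distance}}: it ensures each sensor is served by a single UAV, so that the order statistic defining $\bar{Z}_{i,\theta_i}$, and hence the constant $\alpha_{\bar{Z}_{i,\theta_i}}$, is well defined.
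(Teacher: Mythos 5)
Your proposal is correct and follows essentially the same route as the paper: the paper's own proof simply states that the argument is identical to that of Proposition~\ref{prop:T_uav_ground} because the channel is Ricean in both cases, and you have carried out precisely that argument (non-central $\chi^2$ law for $|g_F|^2$, monotone change of variables through the rate and the reciprocal map, substitution of the sensor--UAV parameters). Your remarks about the logarithm base and the sign of the exponent are sound bookkeeping observations that the paper glosses over.
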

\begin{proof}
The proof of this proposition is similar to the proof of the distribution of the transmission time over the UAV-server link given by \textbf{Proposition~\ref{prop:T_uav_ground}}  because the channel has a Ricean distribution in both cases. 
\end{proof}
\subsection{The Total FL Convergence Time}
The overall communication time is 
\begin{equation}
T_{i,\theta_i}^{\text{cm}} = T_{{\bar{Z}_{i,\theta_i}}} + T_{i, \text{G}} + T_{i, \text{L}}.
\label{eq:total_communication_time}
\end{equation}
To find the distribution of the communication time, the simplifying assumption of equal sized local and global updates is made to have equal transmission times. A generalization to the uneven case is straightforward. Since the transmission time over the sensor-UAV link $t_{{\bar{Z}_{i,\theta_i}}} >0 $ is independent from the transmission time over the UAV-server link, $t_{i,\text{G}} >0$ the probability distribution of the total communication time $t_{{\bar{Z}_{i,\theta_i}}}$ is given by their convolution \cite{papoulis02}\\
\begin{equation}
    f_{T^{\text{cm}}_{i,\theta_i}} (t^{\text{cm}}_{i,\theta_i}) =
       \int_0^{t^{\text{cm}}_{i,\theta_i}}          f_{T_{\bar{Z}_{i,\theta_i}}}(t_{{\bar{Z}_{i,\theta_i}}})
        f_{T_{i,\text{G}}}(t^{\text{cm}}_{i,\theta_i} - t_{{\bar{Z}_{i,\theta_i}}}) d t_{{\bar{Z}_{i,\theta_i}}}.
\end{equation}
The total convergence time is
\begin{equation}
T_{i,\theta_i}^{\text{total}} = T_{i,\theta_i}^{\text{cp}} + T_{i,\theta_i}^{\text{cm}}.
\end{equation}
Finally, \textbf{Proposition~\ref{prop:t_total}} gives the cumulative distribution function of the total delay. 
\begin{proposition}
The cumulative distribution function of the total delay is
  \begin{equation}
      F_{T_{i,\theta_i}^{\text{total}}} \left(t_{i,\theta_i}^{\text{total}} \right) = 
      \sum_{\tau \in \bar{N}_{i,\theta_i}} \mathbb{P} \left[ t_{i,\theta_i}^{\text{cm}}  \le t_{i,\theta_i}^{\text{total}} - \tau |T_{i,\theta_i}^{\text{cp}}  = \tau   \right]
      \mathbb{P}\left[T_{i,\theta_i}^{\text{cp}} = \tau \left(I(\kappa) \frac{CD}{f_{\text{CPU}}}\right) \right]
  \end{equation}
  \label{prop:t_total}
\end{proposition}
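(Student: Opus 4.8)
The plan is to obtain the CDF straight from the definition $F_{T_{i,\theta_i}^{\text{total}}}\bigl(t_{i,\theta_i}^{\text{total}}\bigr) = \mathbb{P}\bigl[T_{i,\theta_i}^{\text{total}} \le t_{i,\theta_i}^{\text{total}}\bigr]$ and to exploit the additive decomposition $T_{i,\theta_i}^{\text{total}} = T_{i,\theta_i}^{\text{cp}} + T_{i,\theta_i}^{\text{cm}}$ established above. The structural observation that drives everything is that the two summands are of different type: the computation time $T_{i,\theta_i}^{\text{cp}}$ is a \emph{discrete} random variable supported on the integer multiples $\tau \, I(\kappa)\frac{\delta D}{f_{\text{CPU}}}$ with $\tau = 0,1,2,\dots$, carrying the Poisson-type mass function derived in \textbf{Section~\ref{sec:computation_time}}, whereas the communication time $T_{i,\theta_i}^{\text{cm}}$ is a \emph{continuous} random variable whose density is the convolution computed in the previous subsection. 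This mixed discrete/continuous nature is exactly what forces the answer into a \emph{sum} over $\tau$ of conditional probabilities rather than a single convolution integral.

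First I would condition on the discrete component and invoke the law of total probability, partitioning the event $\{T_{i,\theta_i}^{\text{cp}} + T_{i,\theta_i}^{\text{cm}} \le t_{i,\theta_i}^{\text{total}}\}$ by the value of $T_{i,\theta_i}^{\text{cp}}$:
\begin{equation*}
F_{T_{i,\theta_i}^{\text{total}}}\bigl(t_{i,\theta_i}^{\text{total}}\bigr) = \sum_{\tau} \mathbb{P}\bigl[ T_{i,\theta_i}^{\text{cp}} + T_{i,\theta_i}^{\text{cm}} \le t_{i,\theta_i}^{\text{total}},\; T_{i,\theta_i}^{\text{cp}} = \tau \bigr],
\end{equation*}
where the sum runs over the countable support of $T_{i,\theta_i}^{\text{cp}}$. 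On each slice $\{T_{i,\theta_i}^{\text{cp}} = \tau\}$ the inequality collapses to $T_{i,\theta_i}^{\text{cm}} \le t_{i,\theta_i}^{\text{total}} - \tau$, so by the definition of conditional probability each summand factors as $\mathbb{P}[ T_{i,\theta_i}^{\text{cm}} \le t_{i,\theta_i}^{\text{total}} - \tau \mid T_{i,\theta_i}^{\text{cp}} = \tau ]\,\mathbb{P}[ T_{i,\theta_i}^{\text{cp}} = \tau ]$. Substituting the computation-time mass function for the second factor, and for the first factor the conditional CDF of $T_{i,\theta_i}^{\text{cm}}$ obtained by integrating the convolution density of the previous subsection up to $t_{i,\theta_i}^{\text{total}} - \tau$, reproduces the claimed expression verbatim.

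The step deserving the most care is the conditional factor $\mathbb{P}[ T_{i,\theta_i}^{\text{cm}} \le t_{i,\theta_i}^{\text{total}} - \tau \mid T_{i,\theta_i}^{\text{cp}} = \tau ]$, and specifically the dependence between the two times. The computation time is driven by the \emph{realized} count of active sensors, while the communication density derived above is parameterized through the \emph{expected} maximum distance $\bar{Z}_{i,\theta_i}$ and the average count $\bar{N}_{i,\theta_i}$, with its randomness coming solely from the Ricean channels; under the modeling assumptions of \textbf{Section~\ref{sec:coverage_time}} the delay distribution is therefore effectively fixed, and the conditional CDF coincides with the marginal $F_{T_{i,\theta_i}^{\text{cm}}}(t_{i,\theta_i}^{\text{total}} - \tau)$. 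Keeping the conditional form in the statement is the conservative choice and sidesteps any need to assert full independence. Two minor points I would also flag: the symbol $\tau$ merely indexes the discrete levels, so the shift $t_{i,\theta_i}^{\text{total}} - \tau$ is shorthand for subtracting the actual time $\tau\, I(\kappa)\frac{\delta D}{f_{\text{CPU}}}$; and a negative argument makes the conditional CDF vanish, which automatically discards every level whose computation time alone already exceeds $t_{i,\theta_i}^{\text{total}}$.
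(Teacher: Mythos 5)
Your proposal is correct and follows essentially the same route as the paper's own proof: both start from the definition $F_{T_{i,\theta_i}^{\text{total}}}(t_{i,\theta_i}^{\text{total}})=\mathbb{P}[T_{i,\theta_i}^{\text{cp}}+T_{i,\theta_i}^{\text{cm}}\le t_{i,\theta_i}^{\text{total}}]$, partition over the discrete support of the computation time via the law of total probability, and retain the conditional CDF of the communication time precisely because the two delays are dependent through $\theta_i$. Your additional remarks on the $\tau$ indexing convention and the vanishing of the conditional CDF for negative arguments are sensible clarifications but do not change the argument.
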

\begin{proof}
    See Appendix~\ref{sec:appendix_t_total}
\end{proof}
\section{Problem Formulation}
\label{sec:problem_formulation}
The server aims at selecting suitable beamwidths for all UAVs that achieve the following conflicting objectives: (i) maximizing the network coverage to incorporate the data of more sensors in FL and (ii) minimizing the time duration of a global iteration. 
\subsection{Objective 1: Maximizing Coverage}
One objective is to maximize the number of sensors uniquely covered by each UAV. That means maximizing the number of active sensors covered by the UAVs while minimizing the overlap between the UAVs’ coverage regions. The problem is challenging as the sensors' distribution is unknown, and the number of active sensors is a random variable due to the random nature of energy harvesting. Formally, 
\begin{equation}
\underset{\boldsymbol{\theta}}{\text{maximize~~}}\sum_{i=1}^U  N_i(\theta_i)- N_{\text{OL}}(\boldsymbol{\theta}),
\label{eq:NoSensors}
\end{equation}
where $\boldsymbol{\theta} = \{\theta_1, \theta_2, \dots, \theta_U\}$ is the set of beamwidths assigned to the UAVs by the server. $N_i(\theta_i)$ is the number of active sensors covered by UAV $i$ when it has beamwidth $\theta_i$ including the overlapping active sensors that are simultaneously covered by other UAVs. $N_{\text{OL}}$ is the number of times active sensors are covered simultaneously by more than one UAV.
\subsection{Objective 2: Minimizing the Maximum Delay Time}
The second objective is to minimize the duration time of one global iteration, as described in \textbf{Section \ref{sec:coverage_time}}. In synchronized FL, the server updates the global model only after receiving the local models of all UAVs. Therefore, the aim is to minimize the maximum update time in a global iteration. That is,
\begin{equation}
\underset{\boldsymbol{\theta}}{\text{minimize} } \, \left[\underset{i \in \mathcal{U}}{\max} \, T_{i,\theta_{i}}^{\text{total} }\right].
\end{equation}
%
\section{Bi-Objective MAB Model}
\label{sec:modeling}
In realistic scenarios, there is often no a priori information about the sensors' distribution, activity, or the channel quality. Therefore, we propose solving the problem using a bi-objective MAB model.\\ 
In proposed model, the decision-maker (bandit agent) is the server, and the action set is the set of UAVs' beamwidth values. Thus, at every round, the server selects one configuration (arm) for the UAVs from a total of $K$ possible configurations, $\boldsymbol{\theta}_{k} = [\theta_{1,k}, \theta_{2,k}, \dots, \theta_{U,k} ]$ where $\theta_{i,k}$ is the beamwidth of UAV $i$ when the configuration $k$ is selected. Consequently, the server receives a reward vector $\boldsymbol{X}(\boldsymbol{\theta}_k) = [X^{\text{cov}} (\boldsymbol{\theta}_{k}), X^{\text{delay}}(\boldsymbol{\theta}_{k}) ]$ with two elements, one for each objective. The first element of the reward vector concerns coverage so that
\begin{equation}
    X^{\text{cov}}(\boldsymbol{\theta}_{k}) = \left( \sum_{i=1}^U N_i(\theta_{i,k}) -  N_{\text{OL}} (\boldsymbol{\theta}_k)  \right)_{\text{normalized}}.
\label{eq:reward_d1}
\end{equation}
The second element is related to the total delay, that is,
\begin{equation}
     X^{\text{delay}}(\boldsymbol{\theta}_k) = \left(  \frac{1}{\underset{i\in \mathcal{U}}{\max} \, T_{i,\theta_{i,k}}^{\text{total}}} \right)_{\text{normalized}}.
     \label{eq:reward_d2}
\end{equation}
Let $\boldsymbol{\mu}_k = [\mu_k^1, \mu_k^2]$ be the expected reward vector of arm $k$. The BO-MAB algorithm must find the optimal arms that maximize the expected reward vector $\boldsymbol{\mu}_k$.\\
The possible values for $\theta_i \in [\theta_{\min}, \theta_{\max}]$ are bounded from above, $\theta_{\max}$, by the minimum received power at the UAV, as given by \eqref{eq:find_theta_max}. Larger upper bounds result in larger coverage areas and a lower received power. At the same time, the difference between the upper- and lower limit determines the level of heterogeneity in the network, which affects the FL performance. In practical systems, $\theta_i$ takes values on a discrete set of values. 
Let $W$ be the number of beamwidths that the server can select for one UAV; then the total number of arms is
\begin{equation}
K=W^U.
\label{eq:exp_arms}
\end{equation}
For example, in a network with three UAVs, each with two possible angles, the server has $2^3=8$ possible configuration choices. 
\section{Solution for the Bi-Objective MAB Problem}
\label{sec:solution_MOMAB}
A BO-MAB problem has multiple optimal solutions according to the specified reward vector. The set of reward vectors that are non-dominated by any other reward vectors is the \textit{Pareto optimal reward set} $\mathcal{O}^{*}$. The arms whose reward vectors belong to $\mathcal{O}^{*}$ form the \textit{Pareto optimal arm set} $\mathcal{A}^{*}$ \cite{drugan2013}. We use two methods to find optimal solutions for the beamwidth selection problem in Section \ref{sec:modeling}: the Pareto and the scalarized upper-confidence-bound 1 (UCB1).
\subsection{Pareto UCB1}
\label{subsec:pareto_ucb1}
This algorithm directly maximizes the reward vectors in the
two-objective reward space. First, the agent pulls each arm once for initialization. Then, at each iteration, it finds the Pareto optimal arm set $\mathcal{A^{'}}$ that corresponds to the optimal rewards. For all non-optimal arms $ \alpha \not\in \mathcal{A^{'}}$, there exists a Pareto optimal arm $k \in \mathcal{A^{'}}$ that dominates arm $\alpha$, i.e.,
\begin{equation}
\bar{\boldsymbol{X}}_{\alpha} + \sqrt{ \frac{2 \ln (n \sqrt[4]{Q |\mathcal{A^{*}}|} )}{n_{\alpha}}} \not\succ \bar{\boldsymbol{X}}_k + \sqrt{ \frac{2 \ln (n \sqrt[4]{Q |\mathcal{A^{*}}|} )}{n_k}}
\label{eq:UCB}
\end{equation}
where $\bar{\boldsymbol{X}}_\alpha$ is the empirical mean reward of arm $\alpha$, the term $\sqrt{ \frac{2 \ln (n \sqrt[4]{Q |\mathcal{A^{*}}|} )}{n_{\alpha}}}$ is the confidence interval for arm $\alpha$, $n_\alpha$ is the number of pulls of arm $\alpha$, and $Q$ is the number of objectives, $Q=2$ here. Since the number of Pareto optimal arms is not known to the player a priori, the term $\sqrt[4]{Q |\mathcal{A^{*}}|}$ in \eqref{eq:UCB} can be replaced with $\sqrt[4]{Q K}$ \cite{drugan2013}. The notation $\boldsymbol{u} \not\succ \boldsymbol{v}$ indicates that the vector $\boldsymbol{v}$ is non-dominated by $\boldsymbol{u}$, meaning that there exists at least one dimension $q$ for which $v^q > u^q$. After finding the set of optimal arms $\mathcal{A}^{'}$, the algorithm pulls an arm $k \in \mathcal{A^{'}}$ uniformly at random and observes its reward. It then updates its average reward vector $\bar{\boldsymbol{X}}_\alpha$ and the counters. The process continues until meeting the stopping condition by finding the set of optimal arms.

The performance of the algorithm is measured by the Pareto regret; a measure of the distance between a sub-optimal reward vector $\boldsymbol{\mu}_\alpha$ and the Pareto optimal rewards. The Pareto regret of a sub-optimal arm $\alpha$ is the minimum positive value $\epsilon_\alpha$, denoted $\epsilon^{*}_{\alpha}$, that, when added to both objectives, results in a virtual optimal reward vector $\boldsymbol{\nu}_\alpha^{*}$ that belongs to the Pareto optimal reward set, i.e,  $\boldsymbol{\nu}_\alpha^{*} = \boldsymbol{\mu}_\alpha + \boldsymbol{\epsilon}^{*}_\alpha \in \mathcal{O}^{*}$ \cite{drugan2013}. {\color{black} The computational complexity for each round of the Pareto UCB1 algorithm is $\mathcal{O}(K \log(K))$ \cite{deb2011multi}, this is to find the Pareto set in a bi-objective problem. The overall complexity of the algorithm is $\mathcal{O}(n K \log(K))$.}
\begin{algorithm}
\small
\caption{\small{Pareto UCB1 \cite{drugan2013}}} 
\label{Alg:ParetoUCB1}
\begin{algorithmic}[1]
\STATE Play each arm once, $n \leftarrow K$, $n_k \leftarrow 1, \forall k$
\REPEAT 
    \STATE Find the Pareto set $\mathcal{A^{'}}$ such that $\forall k \in \mathcal{A^{'}}$ and $\forall \alpha $, \eqref{eq:UCB} holds.
    \STATE Pull arm $k$ chosen uniform randomly from the set $\mathcal{A^{'}}.$
    \STATE $n \leftarrow n+1$, $n_k \leftarrow n_k +1.$
    \STATE Update $\bar{\boldsymbol{X}}_k$.
\UNTIL stopping condition is met.
\end{algorithmic}
\end{algorithm}
%
\subsection{Scalarized Bi-Objective UCB1}
\label{subsec:scalarized_ucb1}
The scalarized bi-objective UCB1 method transfers a bi-objective reward into a single-objective using a scalarization function. To find the Pareto front, a set of $S$ scalarization functions to obtain the different elements in the Pareto front set. A widely-considered choice is the linear scalarization (Lin-S) function, {\color{black}
\begin{equation}
f^j(\boldsymbol{\mu}_k)=\omega_j^{1} \mu_k^1+ \omega_j^{2} \mu_k^2, \quad \forall k \in \mathcal{A}, j \in S,
\end{equation}
where $\omega_j^1, \omega_j^2$ are the weights assigned to the first and second objective under scalarization $j$, such that  $\omega_j^1 + \omega_j^2 =1$. The linear function is simple to implement but is incapable of finding all the optimal arms in a non-convex Pareto set \cite{drugan2013, miettinen1999nonlinear}. Alternatively, the Chebychev scalarization (Cheb-S) function can, under some conditions, find all the optimal arms in a non-convex Pareto set \cite{miettinen1999nonlinear,drugan2013}. The Chebychev scalarization function is given by
\begin{equation}
f^j(\boldsymbol{\mu}_k)= \underset{q \in \{1,2\}}{\min} \quad \omega_j^{q}( \mu_k^q -z^q) \quad \forall k \in \mathcal{A}, j \in S,
\end{equation}
the reference point $z^q = \underset{1 \le k \le K}{\min} \quad \mu_k^q - \varepsilon^q$, where $\varepsilon^q$ is a small positive number, $\varepsilon^q >0$.  \\
}
For a given scalarization function $f^j$, the optimum reward $\boldsymbol{\mu}^{*}$ maximizes $f^j(\boldsymbol{\mu}_k)$, i.e.,
\begin{equation}
    f^j(\boldsymbol{\mu}^{*}) \coloneqq \underset{1 \le k \le K}{\max} f^j(\boldsymbol{\mu}_k).
\end{equation}
\textbf{Algorithm \ref{Alg:Scalarized}} summarizes the scalarized bi-objective UCB1. For each of the scalarization functions $f^j$, the agent plays each arm once for initialization. It then updates the counters $n^j$ for the number of times $f^j$ is played, and $n_k^j$ for the number of times arm $k$ is played under scalarization $j$. Afterward, it selects a function $f^j$ uniformly at random, and plays the arm $k^{*}$ that maximizes the UCB term $f^j(\bar{\boldsymbol{X}}_k) + \sqrt{2 \ln  (n^j)/ n_k^j}$. Next, it updates the average reward of arm $k$, $\bar{\boldsymbol{X}}_k$ and the counters. \\
The regret for scalarization function $f^j$ and sub-optimal arm $\alpha$ is the \textit{scalarized regret},
\begin{equation}
\Delta_{\alpha}^j \coloneqq \underset{1\le k\le K}{\max} f^j(\boldsymbol{\mu}_k) - f^j(\boldsymbol{\mu}_{\alpha}).
\end{equation}
However, this metric does not show the reduction in the bi-objective regret because it combines a set of independent regrets. Thus, we use the unfairness regret to incorporate the mean rewards of all optimal arms \cite{drugan2013}
\begin{equation}
\mathfrak{R}_f(n) = \frac{1}{|\mathcal{A}^{*}|}  \sum_{k\in \mathcal{A}^{*}} (T_k^{*}(n) - \mathbb{E}[T^{*}(n)])^2,
\end{equation}
where $T_k^{*}(n)$ is the number of pulls of arm $k$ and $\mathbb{E}[T^{*}(n)]$ is the expected number of times the optimal arms are pulled. {\color{black} The computational complexity for each scalarization function and each round is $\mathcal{O}(K)$.  Let $n$ be the number of times a scalarization function $f^j \in \{f^1, \dots, f^S \}$ is sampled, then the total complexity is $\mathcal{O}(KSn)$.}
\begin{algorithm}
\small
\caption{\small{Scalarized Bi-Objective UCB1 \cite{drugan2013}}} 
\label{Alg:Scalarized}
\begin{algorithmic}[1]
\STATE \textbf{Input}: $\mathcal{S}=\{ f^1, f^2, \dots, f^S\}$
\FOR{$j =1:S$}
    \STATE Initialize the arms.
    \STATE Play each arm once and observe the reward $\boldsymbol{X}_k^j$.
    \STATE Update $n^j \leftarrow n^j+1, n_k^j \leftarrow n_k^j+1, \bar{\boldsymbol{X}}_k^j $.
\ENDFOR
\REPEAT 
    \STATE Choose a function $f^j$ from $\mathcal{S}$ uniformly at random.
    \STATE Pull arm $k^{*}$ that maximizes $f^j(\bar{\boldsymbol{X}}_k) + \sqrt{2 \ln (n^j)/ n_k^j}$.
    \STATE Update $\bar{\boldsymbol{X}}_{k^{*}}$, and set $n_{k^{*}}^j \leftarrow n_{k^{*}}^j +1, n^j \leftarrow n^j + 1$.
\UNTIL stopping condition is met.
\end{algorithmic}
\end{algorithm}\\
{\color{black}\begin{remark} The bandit algorithms used in this work are extendable to dynamic environments with non-stationary statistical distributions \cite{10044185}.\end{remark}}
\section{Sequential Elimination For Scalarization-Based Best Arm Identification}
\label{sec:sequential_elimination}
\textbf{Section~\ref{sec:solution_MOMAB}} discussed two methods for solving the bi-objective optimization problem under uncertainty.  However, those methods can be inefficient for a large number of UAVs or beamwidths due to the exponential growth in the number of arms, as given in \eqref{eq:exp_arms}. Therefore, this section proposes a novel solution using a generalized BAI algorithm that can control the number of arms eliminated at each round, allowing it to scale well for several arms. To ensure the energy efficiency of the proposed solution, it aims at finding the arm that maximizes the ratio of its expected reward to its expected energy cost. Furthermore, as this algorithm separates the exploration phase from the exploitation phase (pure exploration), the amount of UAV energy consumed during exploration is controlled based on the UAVs' energy budget. \\
The proposed algorithm extends the general sequential elimination algorithm for best arm identification in \cite{shahrampour2017sequential} to (i) include the scalarization-based bi-objective MAB algorithm, and (ii) find the arm that maximizes the ratio between the expected reward to the expected cost. The algorithm studies a fixed budget setting, where the agent aims to find the optimal arm with a high probability in a pre-determined number of rounds (budget). It is significant as it generalizes the successive rejects (Succ-Rej) algorithm for identifying the best arm \cite{audibert2010best} by eliminating one arm in each round of the algorithm. It is also a generalization of the sequential halving (Seq-Halv) algorithm that removes half of the remaining arms in each round \cite{shahrampour2017sequential}. Besides, \textit{Shahrampour et. al.} develop a new algorithm, namely, \textit{non-linear sequential elimination} (N-Seq-EL), which extends the Succ-Rej algorithm. Both algorithms eliminate one arm in every round, but the latter divides the remaining budget by a non-linear function of the remaining arms, a trick that can improve the best arm identification probability.\\
With each arm selection, the server receives $\boldsymbol{X}_k = [X^{\text{cov}}(\boldsymbol{\theta}_k), X^{\text{delay}}(\boldsymbol{\theta}_k) ]$ a two-dimensional reward vector as defined in \eqref{eq:reward_d1} and \eqref{eq:reward_d2}, with expected value $\boldsymbol{\mu}^X_k$. Also, this arm selection results in random energy cost $c_k$, with mean value $\mu^c_k$, affecting both objectives equally. This cost represents the UAV's consumed energy in updating their local models and transmitting their local updates to the server. It is given by
\begin{equation}
    c_k = \sum_{i=1}^{U} E_{i,\theta_i}^{\text{cp}} +E_i^{\text{cm}},
\end{equation}
where $E_{i,\theta_i}^{\text{cp}}$ is given in \eqref{eq:energy_UAV}, $E_i^{\text{cm}}= P_{\text{t},i} T_{i,\text{L}}$, and $ P_{\text{t},i}$ is the transmit power of UAV $i$. \\
The server aims to find a beamwidth configuration (arm) $k \in \mathcal{K}$ that maximizes the ratio of the expected reward to the expected energy cost,
\begin{equation}
   k^{*} = \underset{k \in \{1,\dots,K\} } {\arg \max} \frac{f^j(\boldsymbol{\mu}^X_k)}{\mu^c_k}.
\end{equation} 
\textbf{Algorithm~\ref{Alg:sSR}} summarizes the scalarized cost-aware general sequential elimination (S-C-G-Seq-El) method. It starts by estimating the budget $n^{'}$, i.e., the number of rounds for all scalarization functions, based on the UAV's energy budget. Let $E_{\text{total}}$ be the energy available at a UAV for learning and transmission, i.e., exploration. The fixed budget for running the algorithm is upper bounded by 
\begin{equation}
    n^{'} =\frac{E_{\text{total}}}{|\mathcal{S}| \left( \frac{\alpha}{2} \lambda \Omega_{\theta_{\max}}D  I(\kappa) \delta f^2_{\text{CPU}} +P_{\text{t}} T_{\text{L},\max}\right) },
    \label{eq:alg_budget}
\end{equation}
where $\Omega_{\theta_{\max}}$ is the maximum coverage area of a UAV, and $T_{\text{L},\max}$ is the maximum allowed upload time. The denominator in \eqref{eq:alg_budget} represents the maximum energy consumed by a UAV. It occurs when all the sensors in its coverage region are active and the upload delay is maximized.\\
After estimating the budget, the S-C-G-Seq-El algorithm proceeds to find the optimal arms $\boldsymbol{I}^{*}_S$ from the Pareto front by identifying the optimal arm for each scalarization function in the set $\mathcal{S}= \{f^1, \dots, f^j, \dots, f^{|S|}\}$. Initially, the set of optimal arms $\boldsymbol{I}^{*}_S$ is empty. Besides, the set of active arms for each scalarization function $f^j$ is the set of all arms $\mathcal{K}$, formally, $\boldsymbol{G}_1^j \leftarrow \mathcal{K}$. Then for each scalarization function, the algorithm runs for $R$ rounds, $R = K$ for successive rejects and $R= \lceil \log_2  K \rceil $ in sequential halving. In each round $\rho$, it pulls each arm $k$ in the set of active arms $\boldsymbol{G}_\rho^j$ for $n_\rho - n_{\rho-1}$ times,
\begin{equation}
    n_\rho = \left\lceil \frac{n -K}{C z_\rho} \right\rceil, \quad  \rho=1,\dots, R,
\end{equation}
where $n=\lfloor n^{'}/|S| \rfloor$ is the budget for each scalarization function, with $n^{'}$ being the total budget. Besides, $\{z_\rho\}_{\rho=1}^{R}$ is a decreasing positive sequence, and $C$ is a constant $C=\frac{1}{z_R}+\sum_{\rho=1}^{R}\frac{b_\rho}{z_\rho}$ \cite{shahrampour2017sequential}.\\
After pulling the arms, the algorithm discards the subset $\boldsymbol{B}_\rho^j$ of $b_\rho$ arms with the lowest ratio of expected reward to the expected cost. It also updates the set of active arms, $\boldsymbol{G}_{\rho+1}^j = \boldsymbol{G}_{\rho}^j \backslash\boldsymbol{B}_{\rho}^j$. The process continues for $R$ rounds to eliminate $K-1$ arms. The remaining arm $k^{*}$ is the optimal one for the scalarization function $f^j$, so the algorithm includes it in the set of optimal arms $\boldsymbol{I}^{*}_S$. The cycle repeats for all the scalarization functions to identify all optimal arms. {\color{black} The computational complexity of the S-C-G-Seq-El algorithm is $\mathcal{O}(SRn)$.
}
\begin{algorithm}[ht]
\caption{\small{The Scalarized Cost-Aware General Sequential Elimination Algorithm (S-C-G-Seq-El)}}
\begin{algorithmic}[1]
    \STATE \textbf{Input:} Set of arms $\mathcal{K}$, set of scalarization functions $\mathcal{S}$, budget $n^{'}$, sequence $\{z_\rho, b_\rho\}_{\rho=1}^R$.
    \STATE \textbf{Initialization:} $\boldsymbol{I}^{*}_S \leftarrow \phi, \boldsymbol{G}_1^j \leftarrow \mathcal{K},\, \forall  f^j \in S $.
    \STATE Let $n_0 =0$,
        \\ $C = \frac{1}{z_R} + \sum_{\rho=1}^{R} \frac{b_\rho}{z_\rho}$.\\
        $n_\rho = \left\lceil \frac{n -K}{C z_\rho} \right\rceil$ for $\rho = 1, \dots,R$.
    \FORALL{$f^j \in \mathcal{S}$} 
        \FORALL{rounds $\rho = 1, \dots, R$} 
           \STATE Select each arm $k \in \boldsymbol{G}_\rho^j$ for $n_\rho - n_{\rho-1}$ times.
           \STATE Find the set of $b_\rho$ arms with the smallest ratio of the expected reward to the expected cost, i.e., $\underset{k \in \boldsymbol{G}_\rho^j}{\arg \min} \, f^j(\bar{\boldsymbol{X}}_{k,n_\rho})/\bar{c}_{k,n_\rho}$, denoted by $\boldsymbol{B}_\rho^j$.
           \STATE Discard the worst $b_\rho$ arms. Let $\boldsymbol{G}_{\rho+1}^j = \boldsymbol{G}_{\rho}^j \backslash\boldsymbol{B}_{\rho}^j$.
        \ENDFOR
        \IF{$k^{*} \not \in \boldsymbol{I}^{*}_S$ where $k^{*} = \boldsymbol{G}_{R+1}^j$}
            \STATE $\boldsymbol{I}^{*}_S \leftarrow \boldsymbol{I}^{*}_S \cup \{ k^{*}\}$.
        \ENDIF
    \ENDFOR
\end{algorithmic}
\label{Alg:sSR}
\end{algorithm}
\begin{theorem}
The probability of misidentifying a sub-optimal arm as the optimal one for a scalarization function $f^j$ is
\begin{equation}
      \mathbb{P} \left[ \textbf{G}_{R+1}^j \neq \{ k^{*} \}\right] \le 
       R \underset{\rho \in \{1,\dots,R\}}{\max}\{b_\rho\} \underset{\rho \in \{1,\dots,R\}}{\max} \left\{ 2 \exp\left(- 2 n_\rho \Xi_\rho^2 \right)\right\}.
\end{equation}
\label{theorem:error_propability}
\end{theorem}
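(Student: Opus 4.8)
The plan is to show that a misidentification can occur only if the optimal arm $k^{*}$ is discarded during one of the $R$ elimination rounds, and then to control this by a union bound over rounds followed by a concentration argument inside each round. First I would write the error event $\{\boldsymbol{G}_{R+1}^j \neq \{k^{*}\}\}$ as the union, over $\rho = 1, \dots, R$, of the events that $k^{*}$ is still active at the start of round $\rho$ but falls into the discarded set $\boldsymbol{B}_\rho^j$. Since exactly one arm survives after $R$ rounds, these events exhaust all the ways the wrong arm can remain, so $\mathbb{P}[\boldsymbol{G}_{R+1}^j \neq \{k^{*}\}] \le \sum_{\rho=1}^{R} \mathbb{P}[k^{*} \in \boldsymbol{B}_\rho^j \mid k^{*} \in \boldsymbol{G}_\rho^j]$.

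Next I would bound the per-round error. Writing $\hat{g}_{k} = f^j(\bar{\boldsymbol{X}}_{k,n_\rho})/\bar{c}_{k,n_\rho}$ for the empirical reward-to-cost ratio and $g_k = f^j(\boldsymbol{\mu}^X_k)/\mu^c_k$ for its mean, the arm $k^{*}$ is the one maximizing $g_k$. Let $\boldsymbol{B}_\rho^{*}$ denote the $b_\rho$ active arms with the smallest true ratios $g_k$ in round $\rho$; by definition $k^{*} \notin \boldsymbol{B}_\rho^{*}$. If $k^{*}$ is nonetheless eliminated, then since only $b_\rho$ arms are removed and one of them is $k^{*}$, at least one arm $k \in \boldsymbol{B}_\rho^{*}$ must survive, which forces $\hat{g}_{k} \ge \hat{g}_{k^{*}}$ even though $g_k < g_{k^{*}}$. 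A union bound over the $b_\rho$ candidate arms then gives $\mathbb{P}[k^{*} \in \boldsymbol{B}_\rho^j \mid k^{*} \in \boldsymbol{G}_\rho^j] \le b_\rho \max_{k \in \boldsymbol{B}_\rho^{*}} \mathbb{P}[\hat{g}_{k} \ge \hat{g}_{k^{*}}]$.

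The third step is to bound each pairwise event $\{\hat{g}_{k} \ge \hat{g}_{k^{*}}\}$ by a concentration inequality. This event requires the empirical ratio of $k$ or of $k^{*}$ to deviate from its mean by at least half the true ratio gap; identifying $\Xi_\rho$ with a lower bound on this gap (the separation between $g_{k^{*}}$ and the largest true ratio in $\boldsymbol{B}_\rho^{*}$) and applying Hoeffding's inequality to the $n_\rho$ samples collected by the start of round $\rho$ yields $\mathbb{P}[\hat{g}_{k} \ge \hat{g}_{k^{*}}] \le 2\exp(-2 n_\rho \Xi_\rho^2)$. Substituting into the two previous steps, bounding the sum over $\rho$ by $R$ times its largest summand, and bounding that summand by the product of the two maxima produces $\mathbb{P}[\boldsymbol{G}_{R+1}^j \neq \{k^{*}\}] \le R \max_\rho \{b_\rho\} \max_\rho \{2\exp(-2n_\rho\Xi_\rho^2)\}$, as claimed.

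I expect the concentration step to be the main obstacle, because $\hat{g}_k$ is a ratio of two empirical averages rather than a single bounded mean, so Hoeffding's inequality does not apply to it directly. The argument must either transform the comparison $\hat{g}_k \ge \hat{g}_{k^{*}}$ into a bounded-deviation event (for instance by clearing denominators using the positivity and boundedness of the costs) or concentrate the numerator and denominator separately and combine them, absorbing the resulting constants into the definition of $\Xi_\rho$. This ratio handling is precisely the feature that distinguishes the cost-aware scheme from the reward-only sequential elimination of \emph{Shahrampour et al.}, so it is where the new content of the proof resides.
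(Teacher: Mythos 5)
Your outline matches the paper's proof: the same decomposition of the error event over the $R$ rounds, the same reduction of the per-round error to a union over $b_\rho$ pairwise comparisons (the paper phrases this as the worst case $\boldsymbol{G}_\rho=\{(1),\dots,(g_\rho)\}$ with $k^{*}$ displaced by one of $\{(g_\rho-b_\rho+1),\dots,(g_\rho)\}$, which is equivalent to your surviving-arm argument), and the same final bound by $R\,\max_\rho b_\rho$ times the largest exponential term. The one step you explicitly leave open --- concentrating the empirical ratio $\hat{g}_k$ --- is resolved in the paper by precisely the first of your two suggested routes: for each pair $(k^{*},(k))$ it inserts a deterministic threshold $\epsilon$, uses $\{\hat{g}_{(k)}\ge \hat{g}_{k^{*}}\}\subseteq\{\hat{g}_{k^{*}}\le\epsilon\}\cup\{\hat{g}_{(k)}\ge\epsilon\}$, and clears denominators so that each sub-event is a one-sided deviation of the bounded linear statistic $\bar{X}-\epsilon\bar{c}$ from its mean $\mu^X-\epsilon\mu^c$, to which Hoeffding applies with range $\epsilon+1$ (the within-pull dependence of reward and cost is harmless, since only independence across pulls is needed). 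Choosing $\epsilon=(\mu^X_{k^{*}}+\mu^X_{(k)})/(\mu^c_{k^{*}}+\mu^c_{(k)})$ balances the two exponents and yields $\Xi_\rho=\min_{k}\bigl(\mu^X_{k^{*}}\mu^c_{(k)}-\mu^c_{k^{*}}\mu^X_{(k)}\bigr)/\bigl(\mu^X_{k^{*}}+\mu^X_{(k)}+\mu^c_{k^{*}}+\mu^c_{(k)}\bigr)$ --- a normalized cross-product gap, not the ``half the true ratio gap'' you posit --- so your sketch is structurally right but needs this specific choice of $\epsilon$ and definition of $\Xi_\rho$ for the constant $2\exp(-2n_\rho\Xi_\rho^2)$ to come out as stated.
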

\begin{proof}
  See Appendix~\ref{sec:appendix_theorem}
\end{proof}
\section{Numerical Results}
\label{sec:numerical_results}
\begin{table}[]
{\color{black}
    \caption{Simulation Parameters}
    \centering
    \begin{tabular}{ccc||ccc}
        Parameter  & Value & Description & Parameter  & Value & Description\\  \hline 
        $\lambda$  & 150 $\frac{\text{sensor}}{\text{km}^2}$ & Sensor density
        &  $P_t$  & -3 dBW & Transmit power for all sensors and UAVs   \\ \hline
        $H$  & 230 m & UAV atitude  
        &  $P_{\min}$  & $-100$ dB & Minimum receive power at a UAV \\ \hline %
        $B_{\text{G}}, B_{\text{F}}$  & 1 MHz & Channel bandwidth         
        &    $b$  & 0.16 & Urban environment S-curve parameter  \\ \hline
        $\eta_{\text{LoS}}$  & 1 dB & Mean additional LoS losses          
        & $a$  & 9.61 & Urban environment S-curve parameter       \\ \hline
         $\eta_{\text{NLoS}}$  & 20 dB & Mean additional NLoS losses &$f_c$  & 2.5 GHz & Transmission frequency         
               \\ \hline
        $\alpha$ & $2 \times 10^{-28}$ & Capacitance coefficient & $E_{\text{total}}$ & $20$ J & UAV's learning and transmission energy\\
        \hline
    \end{tabular}
    \label{tab:simulation_parameters}
    }
\end{table}
We consider a $1$ km $\times$ $1$ km square-shaped sensor network. The sensors' distribution is a Poisson point process with density $\lambda=150$ sensor/km$^2$. The ground server is at the center. The locations of the four available UAVs are determined by the circle packing theorem. The UAVs' altitude is $H=230$ m, which is optimal for the circle packing radius $R=250$ m \cite{alhourani2014}.  {\color{black} In this example, we use the MNIST dataset for image classification for the FL training in the 4-UAV network. Each UAV collects the MNIST data samples from the active sensors and engages in a FL image classification task. We adapt the FL convolutional neural network parameters from reference \cite{mcmahan2017communication}. 
\textbf{Table \ref{tab:simulation_parameters}} lists the urban environment and transmission parameters.}\\
The wireless sensors have $5$ cm$^2$ photovoltaic panels for harvesting solar energy. Given that the energy arrival rate in a sunny day is $15$ mW/cm$^2$ \cite{grossi2021energy}, each sensor harvests energy at a rate of $7$5mW. A sensor will transmit using the harvest-then-use energy harvesting scheme if it harvests at least $E_T = 100$ mJ and is covered by a UAV.\\ 
Each UAV can take two beamwidths, namely, $\theta = 41^{\circ}$ and $47.39^{\circ}$, which correspond to coverage radii $200$ m and $250$ m, respectively. Thus, the bi-objective MAB problem has $2^4 = 16$ arms, arm 1 has the configuration $\boldsymbol{\theta}_1=\{41^{\circ},41^{\circ},41^{\circ},41^{\circ}\}$, arm 2 is  $\boldsymbol{\theta}_2=\{41^{\circ},41^{\circ},41^{\circ},47.39^{\circ}\} $ and so on. {\color{black} \textbf{Fig. \ref{fig:arm_configurations}} illustrates the simulated network for four different arms, i.e., beamwidth configurations.} \\
Let $\boldsymbol{\mu}_1, \dots, \boldsymbol{\mu}_{16}$ be the normalized average reward vectors of the 16 arms. A reward vector $\boldsymbol{\mu}_{k}$ dominates vector $\boldsymbol{\mu}_\alpha$ if $\boldsymbol{\mu}_{k}$ is not worse than $\boldsymbol{\mu}_\alpha$ in all objectives and $\boldsymbol{\mu}_{k}$ is strictly better than $\boldsymbol{\mu}_{\alpha}$ in at least one objective. The reward vectors that are not dominated by any other reward vector are optimal and they form the Pareto front. For benchmarking, we find the optimal reward set $\mathcal{O}^{*}$ and the corresponding arm set $\mathcal{A}^{*}$ {\color{black} by finding the empirical normalized average rewards of all arms over $5\times 10^6$ iterations. Then, the set of non-dominated arms is found by exhaustive search, i.e., by comparing the arms' reward vectors to find the dominance relations. The same Pareto front was obtained using the Non-Dominated Sorting Genetic Algorithm II (NSGA-II)}. 
\textbf{Fig.~\ref{fig:reward_Exhaustive_ParetoUCB}} shows the normalized average reward for all the arms, the set of non-dominated average rewards $\mathcal{O}^{*}$ corresponds to the optimal arm set $\mathcal{A}^{*} = \{1, 3, 7, 8, 15, 16\}$. {\color{black} To interpret this result in terms of the network parameters, \textbf{Table \ref{tab:arm_coverage_delay}} transforms the normalized average reward of each arm to the corresponding average number of covered sensors $\bar{N}_{\boldsymbol{\theta}}$ and the average maximum delay denoted by $\bar{T}_{\max}$. Clearly, arms 1 and 16 are optimal because they minimize the maximum delay and maximize the coverage, respectively. Arms 3, 7 ,8, and 15 achieve a tradeoff between the two objectives. However, arm 9, for example, is not optimal since arm 7 achieves a higher coverage and less delay.} 
\begin{figure}[ht]
  \centering
  \begin{subfigure}[b]{0.23\textwidth}
     \centering
         \includegraphics[height=3.2cm]{ 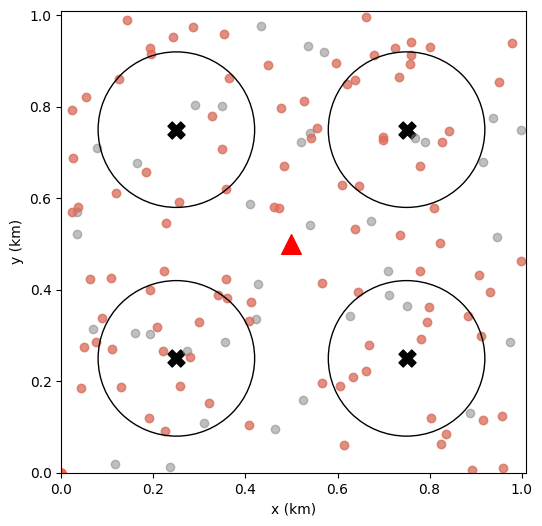}
     \caption{\footnotesize{Arm 1 corresponds to the configuration $\boldsymbol{\theta}_1=\{41^{\circ},41^{\circ},41^{\circ},41^{\circ}\}$ }}
  \end{subfigure}
     \hfill
  \begin{subfigure}[b]{0.23\textwidth}
     \centering
       \includegraphics[height=3.2cm]{ 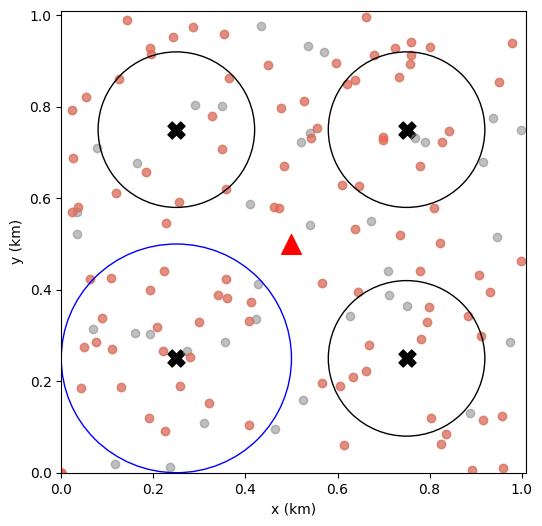}
       \caption{\footnotesize{Arm 2 corresponds to the configuration $\boldsymbol{\theta}_2= \{41^{\circ},41^{\circ},41^{\circ},47.39^{\circ}\}$} }
     \end{subfigure}
          \hfill
  \begin{subfigure}[b]{0.23\textwidth}
     \centering
       \includegraphics[height=3.2cm]{ 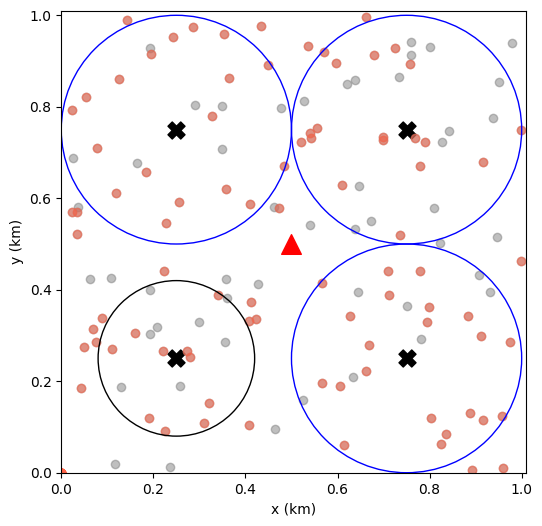}
       \caption{\footnotesize{Arm 15 corresponds to the configuration $\boldsymbol{\theta}_{15}=\{47.39^{\circ},47.39^{\circ},47.39^{\circ},41^{\circ}\}$} }
     \end{subfigure}
          \hfill
  \begin{subfigure}[b]{0.23\textwidth}
     \centering
       \includegraphics[height=3.2cm]{ 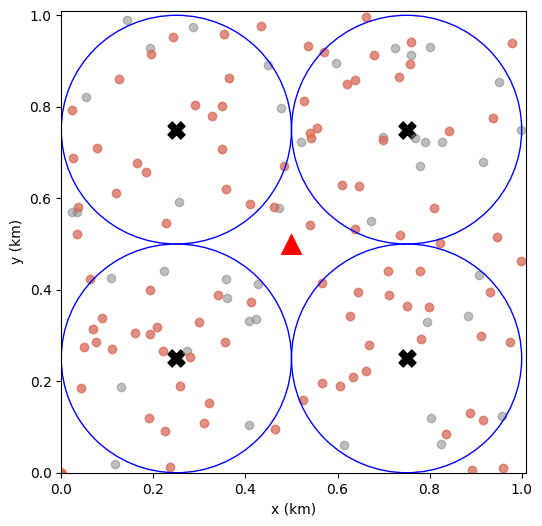}
       \caption{\footnotesize{Arm 16 corresponds to the configuration $\boldsymbol{\theta}_{16}=\{47.39^{\circ},47.39^{\circ},47.39^{\circ},47.39^{\circ}\}$} }
     \end{subfigure}
 \caption{\small{An illustration of the network, the red triangle represents the ground server, and the circles represent the coverage area of each of the UAVs. The red and gray dots represent the active and inactive sensors, respectively.}}
     \label{fig:arm_configurations}
\end{figure}
\begin{table}[]
    \centering
        \caption{Arm's coverage and maximum delay}
       \begin{tabularx}{\textwidth}{|p{14pt}|X|X|X|X|X|X|X|X|X|X|X|X|X|X|X|X|}   
       \hline
        Arm & \cellcolor{blue!25}1 & 2 & \cellcolor{blue!25}3 & 4 & 5 & 6 & \cellcolor{blue!25}7 & \cellcolor{blue!25}8 & 9 & 10 & 11 & 12 & 13 & 14 & \cellcolor{blue!25}15 & \cellcolor{blue!25}16  \\ \hline
        $\bar{N}_{\boldsymbol{\theta}}$& \cellcolor{blue!25}55.5 & 62.6  & \cellcolor{blue!25}63.1 & 71.1 & 62.1 & 69.1 & \cellcolor{blue!25}68.8 & \cellcolor{blue!25}76.1 & 61.3 & 68.4 & 68.8 & 76.8 &  67.5 & 74.4 & \cellcolor{blue!25}75.4 & \cellcolor{blue!25}83.1 \\  \hline
        $\bar{T}_{\max}$  & \cellcolor{blue!25}1.13  & 1.31  & \cellcolor{blue!25} 1.16 & 1.31 & 1.21  & 1.32 &\cellcolor{blue!25} 1.22  & \cellcolor{blue!25}1.32 &1.28 & 1.35  & 1.30 & 1.35 &1.30 & 1.35 & \cellcolor{blue!25} 1.30 & \cellcolor{blue!25} 1.35 \\ \hline
    \end{tabularx}
    \label{tab:arm_coverage_delay}
\end{table}
\textbf{Fig.~\ref{fig:percentage_played}} shows the fraction of time each arm is played in $5 \times 10^6$ rounds. Pareto UCB1 pulls the optimal arms fairly. For the scalarized bi-objective UCB1 algorithm, we use 11 uniformly spaced weights, $w^j \in \{[1,0], [0.9,0.1],\dots, [0,1]\}$. The LinS method cannot find all the optimal arms in the non-convex Pareto front because there is no set of weights that describes all optimal reward vectors. Consequently, the algorithm exploits arms 1 and 16 significantly more frequently than the other optimal arms. {\color{black} The ChebS performs better than LinS in exploiting the optimal arms as it can find the optimal arms in a non-convex Pareto front. This fact also appears in terms of the unfairness regret \textbf{Fig.~\ref{fig:fariness_regret}}. }
\textbf{Fig.~\ref{fig:cumulative_regret}} shows the average regret. The Pareto UCB1 algorithm has a low average regret as it finds all optimal arms and plays them fairly. In contrast, the LinS algorithm incurs a large average regret as it fails to find all the optimal arms. {\color{black} ChebS has a lower average regret than LinS as it performs better in finding the optimal arms.} \\
\begin{figure}[ht]
  \centering
  \begin{subfigure}[b]{0.49\textwidth}
     \centering
         \includegraphics[width=3.1in]{ 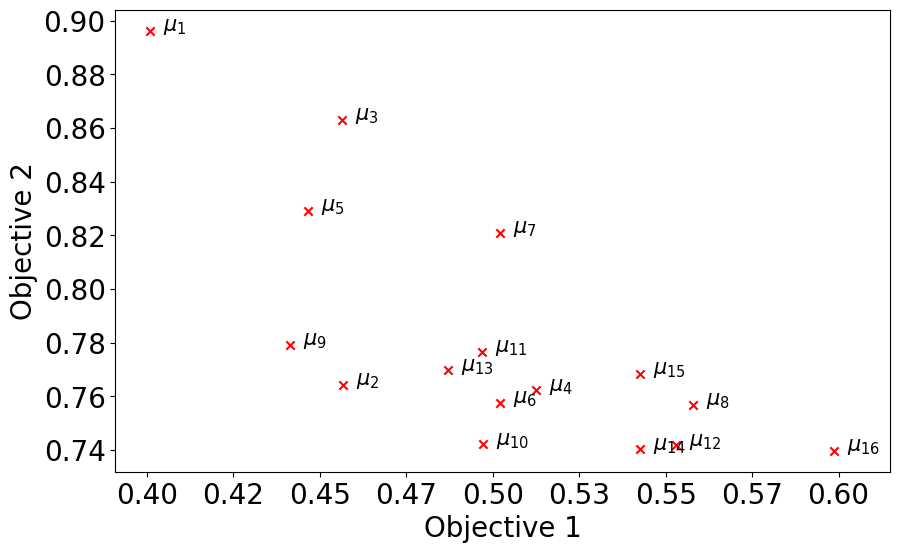}
     \caption{\footnotesize {The average reward for all arms.}} \label{fig:reward_Exhaustive_ParetoUCB}
  \end{subfigure}
     \hfill
  \begin{subfigure}[b]{0.49\textwidth}
     \centering
       \includegraphics[width=3.1in]{ 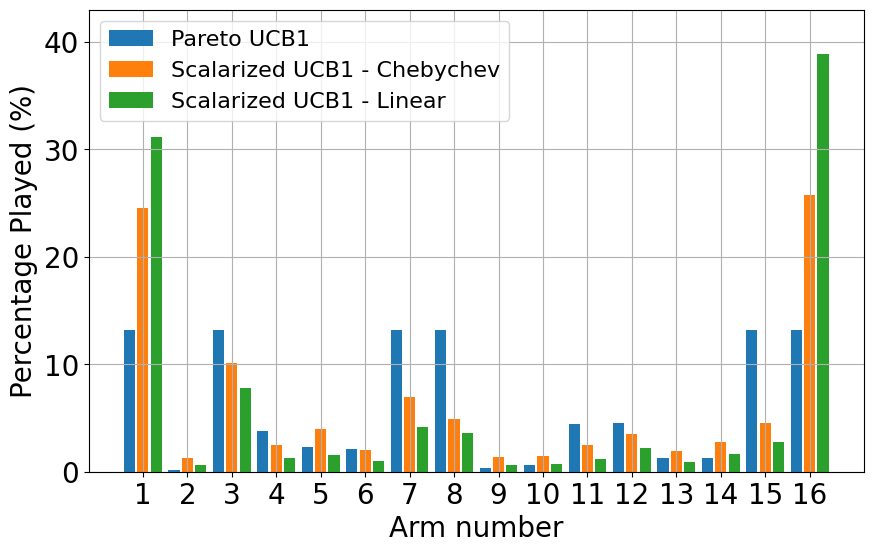}
       \caption{\footnotesize{The percentage of playing each arm.}}
         \label{fig:percentage_played}
     \end{subfigure}
  \begin{subfigure}[b]{0.49\textwidth}
   \centering
     \includegraphics[width=3.1in]{ 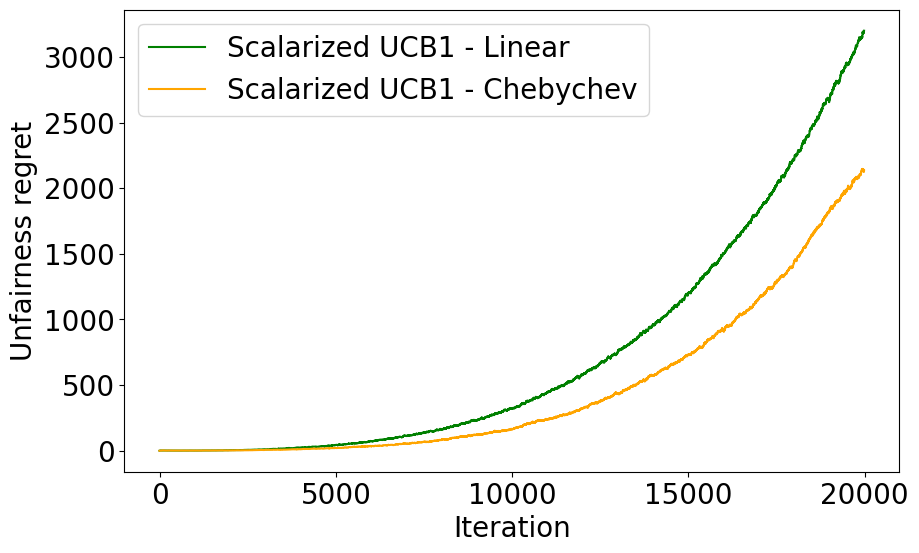}
     \caption{\footnotesize{The unfairness regret of the linearly scalarized UCB1.}}
       \label{fig:fariness_regret}
   \end{subfigure}
        \hfill
  \begin{subfigure}[b]{0.49\textwidth}
     \centering
         \includegraphics[width=3.1in]{ 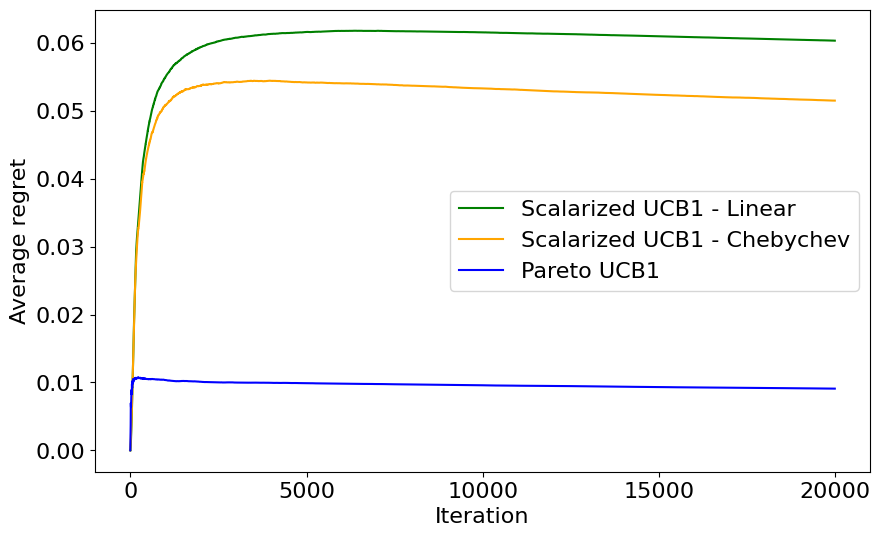}
     \caption{\footnotesize {The average regret.}}
       \label{fig:cumulative_regret}
  \end{subfigure}
 \caption{The performance of the 4-UAV network}
     \label{fig:regrets}
\end{figure}
\begin{figure}
    \centering
    \includegraphics[scale=0.35]{ 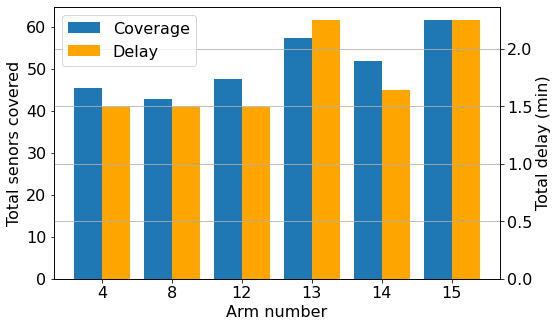}
    \caption{The sensor coverage and total delay at the optimal beamwidth configurations. }
    \label{fig:optimal_coverage_and_delay}
\end{figure}
The multi-objective optimization problem returns a set of optimal arms. However, the server selects one of them as the UAVs' beamwidths based on its preference and the application. To show this, we consider a 4-UAV network that is similar to the one described above, with the exception that the sensors' locations follow an inhomogeneous Poisson distribution with density $\lambda(x,y)= 150(x^2 +y^2)$ sensors/km$^2$. Among the 16 arms, six are optimal. \textbf{Fig.~\ref{fig:optimal_coverage_and_delay}} shows the optimal arms with their coverage and maximum delay. As expected, the circle packing placement (arm 16) maximizes coverage; nevertheless, it might not fit the FL process as it prolongs the delay.
Generally, using any of these 6 arms is optimal as opposed to the sub-optimal arms.\\
\begin{figure}[ht]
  \centering
  \begin{subfigure}[b]{0.30\textwidth}
     \centering
         \includegraphics[height=4cm]{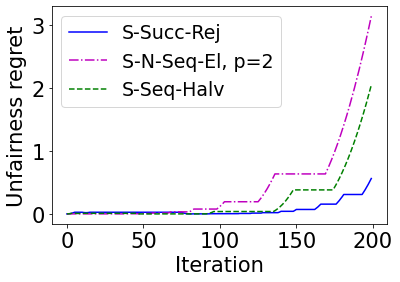}
     \caption{\footnotesize {The unfairness regret when  the arms' energy cost is considered.}}
       \label{fig:unfairness_nEq_cost_iter200}
  \end{subfigure}
     \hfill
  \begin{subfigure}[b]{0.30\textwidth}
     \centering
       \includegraphics[scale=0.4]{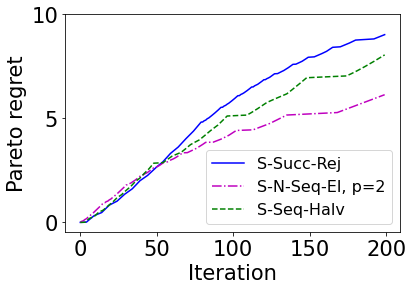}
       \caption{\footnotesize{The Pareto regret when the arms' energy cost is considered.} }
         \label{fig:pareto_regret_nEq_cost_iter200}
     \end{subfigure}
          \hfill
  \begin{subfigure}[b]{0.30\textwidth}
     \centering
       \includegraphics[height=4cm]{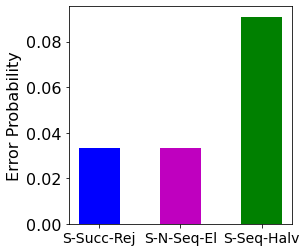}
       \caption{\footnotesize{The error probability when  the arms' energy cost is considered.} }
         \label{fig:error_nEq_cost_iter200}
     \end{subfigure}
          \hfill
  \begin{subfigure}[b]{0.30\textwidth}
     \centering
         \includegraphics[height=4cm]{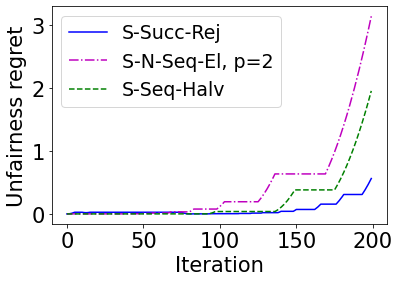}
     \caption{\footnotesize {The unfairness regret for arms with unit cost.}}
       \label{fig:unfairness_cost1_iter200}
  \end{subfigure}
     \hfill
  \begin{subfigure}[b]{0.30\textwidth}
     \centering
       \includegraphics[scale=0.4]{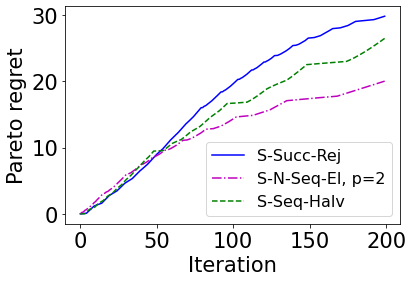}
       \caption{\footnotesize{The Pareto regret for arms with unit cost.} }
         \label{fig:pareto_regret_cost1_iter200}
     \end{subfigure}
          \hfill
  \begin{subfigure}[b]{0.30\textwidth}
     \centering
       \includegraphics[height=4cm]{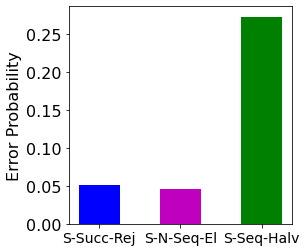}
       \caption{\footnotesize{The error probability for arms with unit cost.} }
         \label{fig:error_cost1_iter200}
     \end{subfigure}
          \hfill
 \caption{The unfairness regret, average regret, and error probability for different instances the S-C-G-Seq-El algorithm for with and without considering the energy cost.}
     \label{fig:regrets_best_arm}
\end{figure}
The next example shows the performance of the S-C-G-Seq-El algorithm using the network of the first numerical example and linear scalarization. The S-C-G-Seq-El algorithm starts by estimating the budget based on the maximum energy consumed by the UAVs per global round when the UAVs take the largest beamwidths. Then, the optimal arms are obtained by finding the best arm for each scalarization function. The performance analysis is here more complicated than the single objective case \cite{audibert2010best, shahrampour2017sequential} because there are multiple performance measures.\\
In \textbf{Fig.~\ref{fig:unfairness_nEq_cost_iter200} }-\textbf{Fig.~\ref{fig:error_nEq_cost_iter200}}, we aim to find the arms that maximize the expected reward to the expected cost ratio for three instances of the S-C-G-Seq-El, S-Succ-Rej, S-N-Seq-EL with $p=2$, and S-Seq-Halv using linear scalarization. The prefix (S-) stands for scalarization to indicate that this is a bi-objective problem. Here, the optimal arms are arms 1, 3 and 15. \textbf{Fig.~\ref{fig:unfairness_nEq_cost_iter200}} shows the unfairness regret. Although both algorithms eliminate one arm every round, the S-Succ-Rej has a lower unfairness regret than S-N-Seq-EL. This is because, in the first rounds when most arms are still available, S-Succ-Rej plays the arms longer than S-N-Seq-El, leaving less budget for the following rounds which results in higher fairness. \textbf{Fig.~\ref{fig:pareto_regret_nEq_cost_iter200}} shows the instantaneous Pareto regret given by $\Delta_k^j = \frac{f^j(\boldsymbol{\bar{X}}_{k^{*}})}{\bar{c}_{k^{*}}} - \frac{f^j(\boldsymbol{\bar{X}}_k)}{\bar{c}_k}$, where $k^{*}$ is the optimal arm under $f^j$, averaged on all scalarization functions. The S-N-Seq-El with $p=2$ has the lowest Pareto regret, followed by Seq-Halv which implies that eliminating half of the arms in each round does not necessarily result in a higher Pareto regret.  \textbf{Fig.~\ref{fig:error_nEq_cost_iter200}} shows the probability of discarding the optimal arm for a given scalarization function, averaged over all scalarization functions. The three algorithms are effective in finding the optimal arms as they have a low error probability. \\
{\color{black} Our proposed algorithm considers the arms' energy cost to ensure that the recommended UAV configuration is energy efficient. But the algorithm can be used to find the arms that maximize the expected reward without considering the cost, i.e., when all arms have a unitary cost.
\textbf{Fig.~\ref{fig:unfairness_cost1_iter200}} -\textbf{Fig.~\ref{fig:error_cost1_iter200}} show the results for the latter case. The discussion here is similar to the previous case except for higher Pareto regret and error probability. This is because when the arms' cost was considered, arm 1 had a much higher ratio of expected reward to expected energy cost compared to the others arms, and the algorithm was capable of finding it with a low error probability despite the limited budget. However, for the case of unitary cost, the sub-optimality gap was smaller, leading to more errors in finding the optimal arm for a given scalarization function $f^j$. Still, the low Pareto regret shows that the reward of the recommended arm is close to the optimal one.}
\section{Conclusion}
\label{sec:conclusion}
In this paper, the performance of FL in UAV-enhanced wireless networks was optimized under uncertainty about channels' quality, sensors' distribution, and transmission probability. Each UAV collects data from the sensors in its coverage regions and operates as an FL client. As the network parameters significantly affect the FL performance, we jointly optimize two conflicting objectives, maximizing the sensor coverage and minimizing the FL convergence time. We proposed a solution based on MO-MAB with two variations, Pareto UCB1, and linear multi-objective UCB1. Besides, we developed an efficient solution, S-C-G-Seq-El, based on the best arm identification concept. We define the best arm as one that maximizes the expected reward to the expected energy-cost ratio for an energy-efficient solution. Numerical results show that Pareto UCB1 performs better than the linearly-scalarized UCB1 algorithm in finding the optimal arms and playing them almost equally frequently. Then, we investigated the performance of three instances of the S-C-G-Seq-El algorithm. Although S-Seq-Halv removes half of the arms in every round, its performance concerning unfairness and Parto regret is comparable to S-Succ-Rej and S-N-Seq-El, which eliminate one arm per round. Our proposed solution has a wide range of applications in communication networks, as most optimization problems in such a network are intrinsically multi-objective. An example is client selection in FL to jointly optimize the convergence time, accuracy, and energy consumption. 
\section{Appendix}
\subsection{Proof of Proposition \ref{prop:T_uav_ground}}
\label{sec:appendix_T_uav_ground}
Let $\gamma_{i,\text{G}} = \frac{P_{\text{t,G}} G_{i,\text{G}}}{L_{i,\text{G}} \sigma^2_{\text{G}}}$. The transmission rate of the UAV or the server is
\begin{equation}
    r_{i,\text{G}} = B_{\text{G}} \log_2 \left( 1 + \gamma_{i,\text{G}} |g_{i,\text{G}}|^2 \right).
\label{eq:proof_rate_ground}
\end{equation}
The time required to transmit a model of size $S_{\text{M}}$ is given by
\begin{equation}
    T^{\text{cm}}_{i,\text{G}} = \frac{S_{\text{M}}}{r_{i,\text{G}}}.
\label{eq:proof_Tcm_ground}
\end{equation}
To obtain the distribution of the UAV-server communication time, we first find the distribution of the UAV-server data rate given in \eqref{eq:proof_rate_ground} and then use \eqref{eq:proof_Tcm_ground} to find the distribution of the communication time.\\
When $g_{i,\text{G}}$ is Ricean distributed, then $\Psi_{i,\text{G}} =|g_{i,\text{G}}|^2$ follows the non-central $\chi^2$  distribution with two degrees of freedom. Its PDF and CDF respectively are \cite{john2008digital}
\begin{equation}
    f_X(x) = \frac{1}{2 \xi^2} e^{-\frac{k^2+x}{2 \xi^2}} I_0\left(\frac{k}{\xi^2} \sqrt{x} \right), \quad  \text{and}  \quad  
    F_X(x) = 1 - Q_1 \left( \frac{k}{\xi},\frac{\sqrt{x}}{\xi}  \right), \quad x>0,
    \label{eq:chi_pdf_cdf}
\end{equation}
where $\xi$ and $k$ are the parameters of the $\chi^2$ distribution. $I_0(x)$ is the modified Bessel function of the first kind and order zero, i.e., $    I_0(x) =\frac{1}{2\pi} \int_0^{2 \pi} e^{x \cos \phi} d\phi$, 
%
%
and $Q_1(.,.)$ is the Marcum $Q$ function, given by $Q_1(a,b) = \int_b^\infty x e^{- \frac{a^2 +x^2}{2}} I_0(ax) dx$.
%
%
Thus, the distribution of $\Psi_i$ yields 
\begin{equation}
    f_{\Psi_{i,\text{G}} }(\psi_{i,\text{G}}) = \frac{1}{2 \xi_{\text{G}}^2} e^{-\frac{k_{i,\text{G}}^2+\psi_{i,\text{G}}}{2 \xi_{\text{G}}^2}} I_0\left(\frac{k_{i,\text{G}}}{\xi_{\text{G}}^2} \sqrt{\psi_{i,\text{G}}} \right), \qquad  \psi_{i,\text{G}}>0
    \label{eq:fading_pdf}
\end{equation}
Now, the distribution of the data rate $r_{i,\text{G}}$ in \eqref{eq:proof_rate_ground} can be obtained using the transformation of random variables. Besides, $\mathbb{P}[r_{i,\text{G}}<0]=0$ because $\log_2(1+x) >0$ for $x>0$,
\begin{equation}
    \mathbb{P}[r_{i,\text{G}} \le \rho_{i,\text{G}}] = \mathbb{P} \left[ B_{\text{G}} \ln \left( 1 + \gamma_i \psi_i \right)  \le \rho_{i,\text{G}} \right] 
    = \mathbb{P}\left[  \psi_{i,\text{G}} \le \frac{e^{\frac{\rho_{i,\text{G}}}{B_{\text{G}}}} -1}{\gamma_{i,\text{G}}} \right]. 
    \label{eq:rate_cdf1}
\end{equation}
Now, substituting \eqref{eq:rate_cdf1} in $F_X(x)$ from \eqref{eq:chi_pdf_cdf} results in the CDF of the transmission rate
\begin{equation}
   F_{r_{i,\text{G}}}(\rho_{i,\text{G}})
   = F_{\Psi_{i,\text{G}}} \left( \frac{e^{\frac{\rho_{i,\text{G}} }{B_{\text{G}}}} -1 }{\gamma_{i,\text{G}}} \right)
   = 1 - Q_1 \left( \frac{k_{i,\text{G}}}{\xi_{\text{G}}}, \frac{\sqrt{e^{\frac{\rho_{i,\text{G}}}{B_{\text{G}}}} - 1}}{\xi_{\text{G}} \sqrt{\gamma_{i,\text{G}}}} \right).
\label{eq:capacity_cdf_chi}
\end{equation}
The CDF can be rewritten as:
\begin{equation}
   F_{r_{i,\text{G}}}(\rho_{i,\text{G}}) =   
   1 - Q_1 \left( \frac{k_{i,\text{G}}}{\xi_{\text{G}}}, \frac{\sqrt{e^{\frac{\rho_{i,\text{G}}}{B_{\text{G}}}} - 1}}{\xi_{\text{G}} \sqrt{\gamma_{i,\text{G}}}} \right)
    = 1 - \int^{\infty}_{\frac{\sqrt{e^\frac{\rho_{i,\text{G}}}{B_{\text{G}}}-1}}
    {\xi_{\text{G}} \sqrt{\gamma_{i,\text{G}}}}} x e^{- \frac{\frac{k_{i,\text{G}}^2}{\xi_{\text{G}}^2} +x^2}{2}} I_0 \left( \frac{k_{i,\text{G}} x}{\xi_{\text{G}}} \right) dx.
\label{eq:cdf_capacity_ground}
\end{equation}
To find the PDF of $r_{i,\text{G}}$, we must find the derivative of the CDF given in \eqref{eq:cdf_capacity_ground}. By applying the Leibniz rule \cite{papoulis02} to \eqref{eq:cdf_capacity_ground}, we arrive at
\begin{equation}
    f_{r_{i,\text{G}}}(\rho_{i,\text{G}}) = \frac{1}{2B_{\text{G}}\gamma_{i,\text{G}} \xi_{\text{G}}^2}
    e^{ \frac{k_{i,\text{G}}^2 \gamma_{i,\text{G}} + e^\frac{\rho_{i,\text{G}}}{B_{\text{G}}} -1}{2\gamma_{i,\text{G}} \xi_{\text{G}}^2} }
    I_0 \left(\frac{k_{i,\text{G}} \sqrt{e^\frac{\rho_{i,\text{G}}}{B_{\text{G}}} -1}}{\xi_{\text{G}}^2 \sqrt{\gamma_{i,\text{G}}}} \right).
\label{eq:pdf_rate_ground}
\end{equation}
The final step is to obtain the distribution of the UAV-server communication time using \eqref{eq:proof_Tcm_ground} by transforming the random variable. Let $y = g(x) = \frac{a}{x}$ with $x, a \ge 0$. The PDF of $y$ yields $f_Y(y) = \frac{a}{y^2} f_X \left(\frac{a}{y} \right)$ \cite{papoulis02}. Thus, the PDF of the communication time is given by
\begin{equation}
    f_{T_{i,\text{G}}}(t_{i,\text{G}}) = \frac{S_\text{M}}{2B_{\text{G}} \gamma_{i,\text{G}} \xi_{\text{G}}^2 t_{i,\text{G}}^2} e^{ \frac{k_1^2 \gamma_{i,\text{G}} + e^{\left(\frac{S_\text{M}}{B_{\text{G}} t_{i,\text{G}}} \right)} -1}{2\gamma_{i,\text{G}} \xi_{\text{G}}^2} }I_0 \left( \frac{k_{i,\text{G}} \sqrt{e^\frac{S_\text{M}}{B_{\text{G}} t_{i,\text{G}}} -1}}{\xi_{\text{G}}^2 \sqrt{\gamma_{i,\text{G}}}}\right),
\end{equation}
which completes the proof. 
\subsection{Proof of Proposition \ref{prop:max_distance}}
\label{sec:appendix_max_distance}
The sensors are distributed according to a  Poisson distribution. Thus, their pairwise distance follows an exponential distribution, i.e.,
$        f_{Z_i}(z_i) = \lambda \Omega_{i,\theta_i} e^{\left(\lambda \Omega_{i,\theta_i} \right)z_i}$.\\
The maximum distance between the sensor and a UAV is thus the maximum order statistics of the exponential distribution. The maximum order statistics of $n$ independent samples of a continuous random variable is \cite{arnold2008first} $f_{X_{(n:n)}}(x) = n \left[F_X(x)\right]^{n-1} f_X(x)$. Thus, the distribution of the maximum sensor distance within the coverage area of UAV $i$, $\Omega_{i,\theta_{i}}$, results from substituting $f_{Z_i}(z_i)$ in $f_{X_{(n:n)}}(x)$. That is,  
\begin{equation}
f_{{\bar{Z}_{i(\bar{N}_{i,\theta_i}:\bar{N}_{i,\theta_i})}}} (z_i) 
  = \lambda \bar{N}_{i,\theta_i} \Omega_{i,\theta_i} \left[1 - e^{\left(\lambda \Omega_{i,\theta_i} \right)z_i}\right]^{\bar{N}_{i,\theta_i}-1}  \Omega_{i,\theta_i} e^{\left(\lambda \Omega_{i,\theta_{i}} \right)z_i},
\end{equation}
where $\bar{N}_{i,\theta_i} = \lambda \Omega_{i,\theta_i}$ represents the average number of active sensors in the coverage disk of UAV $i$. The average maximum distance is then
\begin{equation}
    \bar{Z}_{i,\theta_i} = \int_0^\infty z_i f_{{Z_i}_{\bar{N}_{i,\theta_i}:\bar{N}_{i,\theta_i}}} (z_i) dz_i = \int_0^\infty z_i \lambda \bar{N}_{i,\theta_i} \Omega_{i,\theta_i} e^{\left(\lambda \Omega_{i,\theta_i} \right)z_i} \left[1 - e^{\left(\lambda \Omega_{i,\theta_i} \right)z_i}\right]^{\bar{N}_{i,\theta_i}-1} dz_i.
\end{equation}
%
\subsection{Proof of Proposition~\ref{prop:t_total}}
\label{sec:appendix_t_total}
The computation time $T^{\text{cp}}_{i,\theta}$ is a discrete random variable while the communication time $T^{\text{cm}}_{i,\theta}$ is continuous. They are dependent as both are a function of $\theta_i$. Thus, the cumulative density function (CDF) of their sum $T^{\text{total}}_{i,\theta} = T^{\text{cp}}_{i,\theta} + T^{\text{cm}}_{i,\theta}$ is derived from the definition of the CDF, i.e.,
\begin{align}
     F_{T_{i,\theta_i}^{\text{total}}} \left(t_{i,\theta_i}^{\text{total}} \right) &= \mathbb{P}\left[T_{i,\theta_i}^{\text{total}} \le t_{i,\theta_i}^{\text{total}} \right]  =\mathbb{P}\left[t_{i,\theta_i}^{\text{cp}} + t_{i,\theta_i}^{\text{cm}} \le t_{i,\theta_i}^{\text{total}}  \right] \\ \nonumber
     &=\sum_{\tau \in \bar{N}_{i,\theta_i}} \mathbb{P} \left[ t_{i,\theta_i}^{\text{cm}}  \le t_{i,\theta_i}^{\text{total}} - \tau |T_{i,\theta_i}^{\text{cp}}  = \tau   \right]
      \mathbb{P}\left[T_{i,\theta_i}^{\text{cp}} = \tau \left(I(\kappa) \frac{CD}{f_{\text{CPU}}}\right) \right].
\end{align}
\subsection{Proof of Theorem \ref{theorem:error_propability}}
\label{sec:appendix_theorem}
Each round $\rho$ starts with a set $\boldsymbol{G}_\rho$ of $g_\rho$ available arms. At the end of the round, the algorithm removes the set $\boldsymbol{B}_\rho$ of the worst $b_\rho$ arms. Accordingly, an error occurs when the algorithm recommends a sub-optimal arm by the end of the $R$ rounds \cite{shahrampour2017sequential}, i.e.,
\begin{equation}
    \mathbb{P}\left[\boldsymbol{G}_{R+1} \neq \{k^{*}\}\right] = \mathbb{P}[k^{*} \in \cup_{\rho=1}^R \boldsymbol{B}_\rho] =\sum_{\rho=1}^R \sum_{G_\rho} \mathbb{P}[k^{*} \in \boldsymbol{B}_\rho| \boldsymbol{G}_\rho = G_\rho] \mathbb{P}[ \boldsymbol{G}_\rho = G_\rho].
\end{equation}
The last equality holds because the sets of eliminated arms in different rounds are disjoint.
Following the notation in \cite{audibert2010best}, we use  $(k) \in\{1, \dots,K \}$ to denote $k$-th best arm with the ties broken arbitrarily, thus $\frac{\mu^X_{(1)}}{\mu^c_{(1)}} \ge  \frac{\mu^X_{(2)}}{\mu^c_{(2)}} \ge \dots \ge  \frac{\mu^X_{(K)}}{\mu^c_{(K)}}$.
We also define the \textit{sub-optimality gap} as
$\Delta_k =  \frac{\mu^X_{k^{*}}}{\mu^c_{k^{*}}} -  \frac{\mu^X_{k}}{\mu^c_{k}}$. Thus, $\Delta_{(1)} = \Delta_{(2)} \le  \Delta_{(3)} \le \dots \le  \Delta_{(K)}$.\\
To find the upper bound on the error probability, we consider the worst case. It occurs when the round starts with the set of $g_\rho$ best arms so that $G_\rho = \{(1),(2),\dots, (g_\rho)\}$, and the optimal arm is eliminated with $B_\rho= \{(g_\rho - b_\rho+1), \dots, (g_\rho)\}$. Formally,
\begin{equation}
    \frac{\bar{X}_{k^{*},n_\rho}}{\bar{c}_{k^{*},n_\rho}} \le  \underset{k\in \{ (g_\rho-b_\rho+1),\dots,(g_\rho-1),(g_\rho)\}}{\max} \frac{\bar{X}_{(k),n_\rho}}{\bar{c}_{(k),n_\rho}}.
    \label{eq:error_event}
\end{equation}
For any other $G_\rho$ the worst arm cannot be better than $(g_\rho - b_\rho+1)$ \cite{shahrampour2017sequential}. Therefore, $ \mathbb{P}(\boldsymbol{G}_{R+1} \neq \{k^{*}\}) \le \sum_{\rho=1}^R \mathbb{P}(k^{*} \in \boldsymbol{B}_\rho| \boldsymbol{G}_\rho = \{(1),(2),\dots, (g_\rho)\})$. Using the union bound, the probability of the event in \eqref{eq:error_event} yields
\begin{equation}
  \mathbb{P}\left[\boldsymbol{G}_{R+1} \neq \{k^{*}\}\right] \le
 \sum_{\rho=1}^R \sum_{k =g_\rho-b_\rho+1}^{g_\rho} \mathbb{P}\left[ \frac{\bar{X}_{k^{*},n_\rho}}{\bar{c}_{k^{*},n_\rho}} \le  \frac{\bar{X}_{(k),n_\rho}}{\bar{c}_{(k),n_\rho}} \right].
\end{equation}
The event $\frac{\bar{X}_{k^{*},n_\rho}}{\bar{c}_{k^{*},n_\rho}} \le  \frac{\bar{X}_{(k),n_\rho}}{\bar{c}_{(k),n_\rho}}$ implies that at least one of the following two holds:
\begin{equation}
    \frac{\bar{X}_{k^{*},n_\rho}}{\bar{c}_{k^{*},n_\rho}} \le \epsilon; \quad  \frac{\bar{X}_{(k),n_\rho}}{\bar{c}_{(k),n_\rho}} \ge \epsilon.
\end{equation}
Accordingly, 
\begin{align}
  \mathbb{P}\left[\boldsymbol{G}_{R+1} \neq \{k^{*}\}\right]& \le
  \sum_{\rho=1}^R \sum_{k =g_\rho-b_\rho+1}^{g_\rho} \mathbb{P} \left[ \frac{\bar{X}_{k^{*},n_\rho}}{\bar{c}_{k^{*},n_\rho}} \le \epsilon  \right] + \mathbb{P} \left[ \frac{\bar{X}_{(k),n_\rho}}{\bar{c}_{(k),n_\rho}} \ge \epsilon  \right] \\  
   &=\sum_{\rho=1}^R \sum_{k =g_\rho-b_\rho+1}^{g_\rho} \mathbb{P} \left[ \bar{X}_{k^{*},n_\rho} - \epsilon \bar{c}_{k^{*},n_\rho} \le 0 \right] + \mathbb{P} \left[ \bar{X}_{(k),n_\rho} - \epsilon \bar{c}_{(k),n_\rho} \ge 0 \right] \\ 
    &=\sum_{\rho=1}^R \sum_{k =g_\rho-b_\rho+1}^{g_\rho}\mathbb{P} \left[-( \bar{X}_{k^{*},n_\rho}-\epsilon \bar{c}_{k^{*},n_\rho})  + (\mu^X_{k^{*}} - \epsilon \mu^c_{k^{*}})    \ge \mu^X_{k^{*}} -  \epsilon \mu^c_{k^{*}} \right]  \nonumber \\
    &   +\mathbb{P} \left[ (\bar{X}_{(k),n_\rho}-\epsilon \bar{c}_{(k),n_\rho})  -(\mu^X_{(k)} - \epsilon \mu^c_{(k)} )   \ge -\mu^X_{(k)} +  \epsilon \mu^c_{(k)} \right]. \label{eq:dependent_cost_reward}
\end{align}
Note that at every selection round, the reward and the energy cost are dependent as they both depend on $\boldsymbol{\theta}_k$ and $T_{i,\text{L}}$. However, the rewards obtained at different times are independent of each other, and so are the energy costs. Hence, the Hoeffding inequality still applies in \eqref{eq:dependent_cost_reward}.\\
\begin{equation}
    \mathbb{P}\left[\boldsymbol{G}_{R+1} \neq \{k^{*}\}\right]\le\sum_{\rho=1}^R \sum_{k =g_\rho-b_\rho+1}^{g_\rho} \exp\left(- \frac{2 n_\rho \left(\mu^X_{k^{*}} -  \epsilon \mu^c_{k^{*}} \right)^2}{ (\epsilon+1)^2} \right) +\exp\left(- \frac{2 n_\rho \left( -\mu^X_{(k)} +  \epsilon \mu^c_{(k)} \right)^2}{ (\epsilon+1)^2} \right). \label{eq:error_epsilon}
\end{equation}
By substituting the value of $\epsilon = \frac{\mu^X_{k^{*}} +\mu^X_{(k)}}{\mu^c_{k^{*}} +\mu^c_{(k)}}$ in \eqref{eq:error_epsilon} and rearranging the terms, \eqref{eq:error_epsilon} reduces to
\begin{equation}
 \mathbb{P}\left[\boldsymbol{G}_{R+1} \neq \{k^{*}\}\right] \le \sum_{\rho=1}^R \sum_{k = g_\rho-b_\rho+1}^{g_\rho} 2 \exp\left(- 2 n_\rho \left(\frac{\mu^X_{k^{*}} \mu^c_{(k)} - \mu^c_{k^{*}} \mu^X_{(k)}}{\mu^X_{k^{*}} +\mu^X_{(k)}+ \mu^c_{k^{*}}+\mu^c_{(k)} } \right)^2\right). 
\end{equation}
Let $\Xi_\rho = \underset{k\in \{(g_\rho -b_\rho+1),\dots,(g_\rho)\}}{\min}  \left(\frac{\mu^X_{k^{*}} \mu^c_{(k)} - \mu^c_{k^{*}} \mu^X_{(k)}}{\mu^X_{k^{*}} +\mu^X_{(k)}+ \mu^c_{k^{*}}+\mu^c_{(k)} } \right)$, the upper bound on the error probability yields
\begin{equation}
 \mathbb{P} \left[\boldsymbol{G}_{R+1}\neq k^{*} \right] \le \sum_{\rho=1}^R  2 b_\rho \exp\left(- 2 n_k \Xi_\rho^2\right)  \le  R \underset{\rho \in \{1,\dots,R\}}{\max}\{b_\rho\} \underset{\rho \in \{1,\dots,R\}}{\max} \left\{ 2 \exp\left(- 2 n_\rho \Xi_\rho^2 \right)\right\}.
\end{equation}
\bibliography{Journal_Paper/main}
\bibliographystyle{ieeetr}
\end{document}